%% file: main.tex
\documentclass{article}

    \PassOptionsToPackage{numbers, compress}{natbib}

\usepackage[preprint]{neurips_2026}

\usepackage{amsmath}

\input{preamble}
\usepackage{eccvabbrv}
\usepackage{amsthm}
\newtheorem{proposition}{Proposition}
\newtheorem*{remark}{Remark}

\newif\ifarxiv
\newif\ifunified %
\arxivtrue %
\unifiedtrue

\ifunified
  \newcommand{\mcref}[1]{\cref{#1}}
  \newcommand{\mcite}[1]{\cite{#1}}
  \newcommand{\suppcref}[1]{\cref{#1}}
\else
  \newcommand{\suppcref}[1]{Supp.~\cref{#1}}
  \newcommand{\mcite}[1]{{\NoHyper\cite{#1}\endNoHyper}}
  \newcommand{\mcref}[1]{{\NoHyper\cref{#1}\endNoHyper}}
  \fi
  
\newcommand{\myparagraph}[1]{\noindent\textbf{#1}\enspace}

\usepackage[utf8]{inputenc} %
\usepackage[T1]{fontenc}    %
\usepackage{hyperref}       %
\usepackage{url}            %
\usepackage{booktabs}       %
\usepackage{amsfonts}       %
\usepackage{nicefrac}       %
\usepackage{microtype}      %
\usepackage{xcolor}         %

\hypersetup{
  colorlinks=true,
  linkcolor=blue,
  citecolor=blue,
  urlcolor=blue
}

\definecolor{ForestGreen}{RGB}{34,139,34}
\definecolor{BrickRed}{RGB}{178,34,34}

\usepackage{cleveref}

\AtEndPreamble{%
  \crefname{proposition}{Prop.}{Props.}%
  \Crefname{proposition}{Prop.}{Props.}%
  \crefname{algorithm}{Alg.}{Algs.}%
  \Crefname{algorithm}{Alg.}{Algs.}%
  \crefname{figure}{Fig.}{Figs.}%
  \Crefname{figure}{Fig.}{Figs.}%
  \crefname{table}{Tab.}{Tabs.}%
  \Crefname{table}{Tab.}{Tabs.}%
  \crefname{section}{Sec.}{Secs.}%
  \Crefname{section}{Sec.}{Secs.}%
  \crefname{subsection}{Sec.}{Secs.}%
  \Crefname{subsection}{Sec.}{Secs.}%
  \crefname{subsubsection}{Sec.}{Secs.}%
  \Crefname{subsubsection}{Sec.}{Secs.}%
  \crefname{equation}{Eq.}{Eqs.}%
  \Crefname{equation}{Eq.}{Eqs.}%
  \crefname{appendix}{App.}{Apps.}%
  \Crefname{appendix}{App.}{Apps.}%
}

\title{Designing Optimal Transport Flows}

\author{%
  \normalfont
  Shimon Malnick$^1$ \quad
  Matan Rusanovsky$^1$ \quad
  Ohad Fried$^2$ \quad
  Shai Avidan$^1$ \\[2pt]
  $^1$Tel Aviv University \quad $^2$Reichman University \\
  \texttt{\{malnick,matanru\}@mail.tau.ac.il} \quad
  \texttt{ofried@runi.ac.il} \quad
  \texttt{avidan@eng.tau.ac.il} \\
  Project page:~\href{https://www.malnick.net/designing_ot_flows}{malnick.net/designing\_ot\_flows}%
}

\begin{document}

\maketitle

\input{macros}

\input{sec/0_abstract}
\input{sec/1_intro}
\input{sec/2_related}

\input{sec/3_background}
\input{sec/4_method}
\input{sec/5_experiments}

\input{sec/6_limitations}

\input{sec/7_1_acks}

\bibliographystyle{plainnat}
\bibliography{main}

\input{sec/8_sup.tex}

\end{document}

%% file: preamble.tex
\newcommand{\cmark}{\ding{51}}%
\newcommand{\xmark}{\ding{55}}%

\usepackage{fix-cm}       %
\usepackage{subcaption}
\usepackage{caption}
\usepackage{adjustbox}
\usepackage{dcolumn}
\usepackage{xspace}
\usepackage{multirow}
\usepackage{makecell}
\usepackage{tikz}
\usetikzlibrary{arrows.meta,backgrounds,calc,fit, positioning, shapes.geometric, decorations.pathmorphing, external}
\usepackage{algorithm}
\usepackage{algpseudocode}
\usepackage{xcolor}
\usepackage{colortbl}
\usepackage{pifont}
\usepackage[normalem]{ulem}   %
\usepackage{enumitem}
\usepackage{wrapfig}        %
\usepackage{listings}

%% file: macros.tex
\newcommand{\uu}[1][t]{u^\theta_{#1}}
\newcommand{\Ours}{\textsc{Ours}\xspace}

\newcommand{\betweencellpdf}{\ensuremath{h^c}}
\newcommand{\betweencellpdffine}{\ensuremath{h^f}}
\newcommand{\approxbetweencellpdffine}{\ensuremath{\hat{h}^f}}
\newcommand{\incellpdf}{\ensuremath{f}}
\newcommand{\incellpdfnormalized}{\ensuremath{f'}}

\newcommand{\incellcdf}{\ensuremath{F}}

\newcommand{\totalpdf}{\ensuremath{f_{dd}}}
\newcommand{\totalcdf}{\ensuremath{F_{dd}}}

\newcommand{\ignorethis}[1]{}
\newcommand{\redund}[1]{#1}

\newcommand{\aposteriori}     {\textit{a~posteriori}}
\newcommand{\perse      }     {\textit{per~se}}
\newcommand{\naive      }     {{na\"{\i}ve}}
\newcommand{\Naive      }     {{Na\"{\i}ve}}
\newcommand{\Identity   }     {\mat{I}}
\newcommand{\Zero       }     {\mathbf{0}}
\newcommand{\Reals      }     {{\textrm{I\kern-0.18em R}}}
\newcommand{\isdefined  }     {\mbox{\hspace{0.5ex}:=\hspace{0.5ex}}}
\newcommand{\texthalf   }     {\ensuremath{\textstyle\frac{1}{2}}}
\newcommand{\half       }     {\ensuremath{\frac{1}{2}}}
\newcommand{\third      }     {\ensuremath{\frac{1}{3}}}
\newcommand{\fourth     }     {\ensuremath{\frac{1}{4}}}

\newcommand{\Lone} {\ensuremath{L_1}}
\newcommand{\Ltwo} {\ensuremath{L_2}}

\newcommand{\degree} {\ensuremath{^{\circ}}}

\newcommand{\mat        } [1] {{\text{\boldmath $\mathbit{#1}$}}}
\newcommand{\Approx     } [1] {\widetilde{#1}}
\newcommand{\change     } [1] {\mbox{{\footnotesize $\Delta$} \kern-3pt}#1}

\newcommand{\Order      } [1] {O(#1)}
\newcommand{\set        } [1] {{\lbrace #1 \rbrace}}
\newcommand{\floor      } [1] {{\lfloor #1 \rfloor}}
\newcommand{\ceil       } [1] {{\lceil  #1 \rceil }}
\newcommand{\inverse    } [1] {{#1}^{-1}}
\newcommand{\transpose  } [1] {{#1}^\mathrm{T}}
\newcommand{\invtransp  } [1] {{#1}^{-\mathrm{T}}}
\newcommand{\relu       } [1] {{\lbrack #1 \rbrack_+}}

\newcommand{\abs        } [1] {{| #1 |}}
\newcommand{\Abs        } [1] {{\left| #1 \right|}}
\newcommand{\norm       } [1] {{\| #1 \|}}
\newcommand{\Norm       } [1] {{\left\| #1 \right\|}}
\newcommand{\pnorm      } [2] {\norm{#1}_{#2}}
\newcommand{\Pnorm      } [2] {\Norm{#1}_{#2}}
\newcommand{\inner      } [2] {{\langle {#1} \, | \, {#2} \rangle}}
\newcommand{\Inner      } [2] {{\left\langle \begin{array}{@{}c|c@{}}
                               \displaystyle {#1} & \displaystyle {#2}
                               \end{array} \right\rangle}}

\newcommand{\twopartdef}[4]
{
  \left\{
  \begin{array}{ll}
    #1 & \mbox{if } #2 \\
    #3 & \mbox{if } #4
  \end{array}
  \right.
}

\newcommand{\fourpartdef}[8]
{
  \left\{
  \begin{array}{ll}
    #1 & \mbox{if } #2 \\
    #3 & \mbox{if } #4 \\
    #5 & \mbox{if } #6 \\
    #7 & \mbox{if } #8
  \end{array}
  \right.
}

\newcommand{\len}[1]{\text{len}(#1)}

\newlength{\w}
\newlength{\h}
\newlength{\x}

\definecolor{darkred}{rgb}{0.7,0.1,0.1}
\definecolor{darkgreen}{rgb}{0.1,0.6,0.1}
\definecolor{cyan}{rgb}{0.7,0.0,0.7}
\definecolor{otherblue}{rgb}{0.1,0.4,0.8}
\definecolor{maroon}{rgb}{0.76,.13,.28}
\definecolor{burntorange}{rgb}{0.81,.33,0}
\definecolor{cyan_real}{rgb}{0.0,1.0,1.0}
\definecolor{purple_real}{rgb}{0.5,0.0,0.5}
\definecolor{pixelblue}{HTML}{2E6FAC}
\definecolor{latentred}{HTML}{D95030}
\definecolor{traincolor}{rgb}{0.0,0.5,0.5}        %
\definecolor{infercolor}{rgb}{0.85,0.3,0.15}       %

\ifdefined\ShowNotes
  \newcommand{\colornote}[3]{{\color{#1}\textbf{#2} #3\normalfont}}
\else
  \newcommand{\colornote}[3]{}
\fi

\newcommand {\todo}[1]{\colornote{cyan}{TODO}{#1}}
\newcommand {\ohad}[1]{\colornote{burntorange}{OF:}{#1}}
\newcommand {\shai}[1]{\colornote{darkgreen}{SA:}{#1}}
\newcommand {\shimon}[1]{\colornote{otherblue}{SM:}{#1}}
\newcommand {\matan}[1]{\colornote{maroon}{MR:}{#1}}

\newcommand {\reqs}[1]{\colornote{red}{\tiny #1}}

\newcommand {\new}[1]{\colornote{red}{#1}}

\newcommand*\rot[1]{\rotatebox{90}{#1}}

\newcommand {\newstuff}[1]{#1}

\newcommand\todosilent[1]{}

\newcommand{\woBGmask}{{w/o~bg~\&~mask}}
\newcommand{\woMask}{{w/o~mask}}

\providecommand{\keywords}[1]
{
  \textbf{\textit{Keywords---}} #1
}

\newcommand{\R}{\mathbb{R}}
\newcommand{\D}{\mathcal{D}}
\newcommand{\N}{\mathcal{N}}
\newcommand{\DC}{\D_\mathcal{C}}
\newcommand{\DT}{\D_\mathcal{T}}
\newcommand{\X}{\mathcal{X}}
\newcommand{\Z}{\mathcal{Z}}

\newcommand{\probP}{\text{I\kern-0.15em P}}

\newcommand{\argmin}{\mathop{\mathrm{argmin}}}
\newcommand{\argmax}{\mathop{\mathrm{argmax}}}
\newcommand{\pr}[1]{\left(#1\right)}

\ifdefined\RevisedVersion
  \newcommand{\revision}[1]{{\textcolor{red}{#1}}}
  \long\def\trackchange#1#2{{\sout{#1}}\,{\color{red}#2}}
\else
  \newcommand{\revision}[1]{#1}
  \newcommand{\trackchange}[2]{#2}
\fi

\newcommand*{\nolink}[1]{%
  \begin{NoHyper}#1\end{NoHyper}%
}

\renewcommand{\cmark}{{\textcolor{ForestGreen}{\ding{51}}}}
\renewcommand{\xmark}{{\textcolor{BrickRed}{\ding{55}}}}

%% file: sec/0_abstract.tex
\begin{abstract}
Flow matching models learn to transport samples
from a simple prior distribution to a complex
data distribution.
When prior-data pairs are coupled via optimal transport (OT),
the learned trajectories are straight and non-crossing,
enabling fast, even single-step, generation.
However, computing the OT coupling
in high dimensions is intractable,
and existing methods attempt to solve the OT problem,
at the cost of persistent bias or significant overhead.
Rather than solving for the OT coupling, 
we reformulate the problem.
Once the prior is treated as a design choice rather than a fixed input,
the OT coupling between prior and data is no longer unique.
Many priors admit an OT-optimal identity coupling to the data,
leaving us free to choose one
that is also tractable to sample.
We identify low-frequency projection of natural images as such a choice.
The identity coupling between data
and its low-frequency representation
is empirically OT-optimal,
the prior is structured enough to be sampled
by a lightweight model at inference,
and the remaining flow-matching task reduces to
synthesizing high-frequency detail.
Interpolating the prior with Gaussian noise
further improves generation quality
while preserving the OT coupling.
The approach requires no modifications to the flow model itself,
and integrates naturally
with latent-space models,
classifier-free guidance,
and one-step generation frameworks.
Across all benchmarks,
our method reduces trajectory curvature
by more than $2\times$
compared to existing flow matching methods,
yielding better generation quality
in the few-step regime.
\ifarxiv\else\footnote{Our code will be released publicly.}\fi
\end{abstract}

%% file: sec/1_intro.tex
\section{Introduction}
\label{sec:intro}

Generative models have become one of the most
transformative tools in modern machine learning.
In just a few years, they have scaled to produce
photorealistic images at high
resolution~\cite{esser2024scaling,flux2024,ma2024sit,rombach2022stable_diffusion},
generate temporally coherent
video~\cite{jin2025pyramidal,hacohen2024ltx,polyak2024movie,ho2022video},
and are rapidly expanding to new modalities
including audio~\cite{NEURIPS2023_2d8911db,mehta2024matcha,borsos2023audiolm}, 3D~\cite{hui2025notsooptimal,lan2025gaussiananything},
and beyond~\cite{jing2023alphafold,song2023equivariant}.
They are the backbone of
both open-source and commercial generation
systems~\cite{esser2024scaling,flux2024,polyak2024movie},
powering creative tools used by millions.

At the heart of every generative model lies a
single conceptual task,
learn to transform a simple, easy-to-sample distribution
into a complex, unknown target distribution from which we only observe finite samples.
If one could compute the optimal transport (OT) map
between a simple distribution and the true underlying data distribution,
generation would reduce to a single map evaluation,
transforming any noise sample directly into a data sample.
This would be the perfect generator,
a direct and cost-minimizing transformation
from noise to data,
and the ideal we aim to approach,
high-quality generation in as few steps as possible,
ideally just one.

But this map is out of reach.
Computing the optimal transport coupling
between a high-dimensional data distribution
and a standard Gaussian prior is computationally 
prohibitive at modern scales.
Exact solvers scale cubically in the number of
samples~\cite{kuhn1955hungarian},
while entropic relaxations~\cite{NIPS2013_af21d0c9}
remain prohibitive in the dimensionalities
of modern data.
Faced with this intractability,
the field turned to flow matching~\cite{lipman2022flow,liu2022flow,albergo2022building}.
Rather than computing the transport map directly,
flow matching learns a velocity field that
continuously deforms the simple distribution
into the data distribution over a unit time interval.
In standard practice, this velocity field is trained
on independently sampled noise-data pairs,
forgoing any global coupling structure.
This has proven remarkably effective,
and flow-based architectures now underpin
the leading generation
systems~\cite{esser2024scaling,flux2024,hacohen2024ltx,jin2025pyramidal}.

Yet flow matching inherits a fundamental tension
from the intractability it circumvents.
Since the true OT coupling is unknown,
standard training pairs noise and data 
independently at random.
This forces the learned velocity field to average
over conflicting transport directions,
producing curved, crossing trajectories
that demand many integration steps
at inference~\cite{pooladian2023multisample,tong2024improving}.
This is the central obstacle
to efficient few-step and single-step generation,
since straighter trajectories
require fewer steps to traverse.

\input{figs/teaser/fig_gaussians}
A number of works have sought to recover
what random pairing leaves on the table.
Minibatch OT
methods~\cite{pooladian2023multisample,tong2024improving}
solve small transport problems within each
training batch,
but these local approximations introduce
persistent bias that does not vanish with
training~\cite{kornilov2024optimal}.
To move beyond single-batch scope,
some methods accumulate and refine couplings
across minibatches throughout
training~\cite{davtyan2025faster},
while semi-discrete OT
approaches~\cite{kong2025alignflow,mousavi2025flow}
solve a global transport problem over the
full dataset as a preprocessing step,
adding significant overhead.
Distillation~\cite{liu2022flow,seongbalanced}
and trajectory
optimization~\cite{yue2025oat}
straighten paths post-training,
adding further stages on top of a pretrained model.
All of these methods treat the OT problem
as something to be solved.

We argue it can be avoided by design,
treating the prior itself as a variable to be chosen
rather than a fixed input.
As illustrated in \cref{fig:teaser_gaussian_a},
standard flow matching pairs a Gaussian distribution
with a complex target through random coupling,
producing crossing trajectories
that curve the learned velocity field and
demand many integration steps at inference.
Rather than attempting to solve or approximate
the OT coupling for a fixed Gaussian prior,
we ask a different question.
Can we \emph{choose} a prior distribution
for which optimal transport is achieved by design?
Concretely, we seek a \emph{new prior distribution},
defined by a transformation of the data,
such that the induced coupling is OT-optimal,
the new prior can be sampled efficiently
and generation from the target distribution remains faithful.
\cref{fig:teaser_gaussian_b} illustrates this idea,
where a structured intermediate distribution $\tilde{p}_0$
replaces the Gaussian.
Its coupling to the data is OT-optimal by design,
while the remaining mapping from Gaussian noise to $\tilde{p}_0$
only needs to generate coarse image structure
in a lower-dimensional space.

Such a prior can be obtained using a simple, fixed transformation
that requires no data-dependent learning or optimization.
Leveraging the spectral concentration
of natural images~\cite{ruderman1994statistics},
we project each image onto its low-frequency representation.
Most of the $\ell_2$ energy of a natural image
lives in its low frequencies,
so the projection retains coarse structure
while discarding only fine detail.
This yields a prior that is informative enough
to keep distinct images well separated,
and simple enough to be sampled by a lightweight generator.
The identity coupling between an image
and its low-frequency representation
is empirically OT-optimal,
recovering an OT pair without solving any OT problem.

This decomposes generation
into a low-dimensional problem
and an OT-structured one.
The first stage operates
in a much smaller space,
generating only coarse image structure
at a fraction of the original dimensionality.
The second stage synthesizes
the remaining high-frequency details
at full resolution,
precisely where the identity coupling
between projection and data is OT-optimal by design,
requiring no solver or precomputation.
A deterministic projection prior
lies on a low-dimensional subspace,
a known difficulty for velocity-field learning~\cite{song2019generative}.
We interpolate the projection with Gaussian noise,
empirically preserving the OT-identity coupling
while improving generation quality.
 
 Our contributions are as follows:
\textbf{(1)} we propose \emph{OT by design},
a new perspective on optimal transport in flow matching.
Rather than solving an OT problem for a fixed prior,
we design the prior so that its identity coupling to the data
is empirically OT-optimal,
a property that other methods spend significant effort
to approximate.
 \textbf{(2)} We construct this prior using low-frequency projections of training images,
verify the resulting identity coupling is OT-optimal with the Hungarian algorithm,
add a noise interpolation step that improves generation quality
while preserving the coupling,
and show that OT-optimality alone
is not sufficient without low-frequency structure.
 \textbf{(3)} We show that this design consistently reduces trajectory curvature by more than $2\times$ over existing flow matching methods,
  yields strong few-step and single-step generation
across CIFAR-10, FFHQ, and ImageNet, and integrates naturally with latent-space models, classifier-free guidance,
   and one-step frameworks such as MeanFlow~\cite{geng2025mean}, 
   where it surpasses both the 1-step and 2-step baselines in our experiments.

%% file: figs/teaser/fig_gaussians.tex
\providecommand{\gfigdistsize}{\normalsize}   %
\providecommand{\gfiglabelsize}{\Large}   %
\providecommand{\gfigbotsize}{\normalsize}    %
\definecolor{srccolor}{RGB}{55,90,200}
\definecolor{tgtcolor}{RGB}{200,55,55}
\definecolor{projcolor}{RGB}{30,140,60}
\definecolor{arrowgray}{RGB}{140,140,140}
\definecolor{labelRedG}{RGB}{185,35,35}
\definecolor{labelGreenG}{RGB}{22,138,55}

\begin{figure}[ht]
    \centering
    \resizebox{\textwidth}{!}{%
    \begin{tikzpicture}[
        >=Stealth,
        declare function={
            gauss(\x,\m,\s,\a) = \a*exp(-(\x-\m)^2/(2*\s^2));
            hfmog(\x) = gauss(\x,-2.4,0.17,0.55) + gauss(\x,-1.8,0.16,0.85)
                + gauss(\x,-1.3,0.18,1.10) + gauss(\x,-0.7,0.15,0.80)
                + gauss(\x,-0.2,0.17,0.90) + gauss(\x,0.4,0.16,1.20)
                + gauss(\x,0.9,0.18,1.30) + gauss(\x,1.4,0.15,1.00)
                + gauss(\x,2.0,0.17,0.75) + gauss(\x,2.5,0.16,0.55);
            lfenv(\x) = gauss(\x,-1.3,0.85,1.10) + gauss(\x,0.5,0.90,1.20)
                + gauss(\x,2.0,0.80,0.80);
        }
    ]

    \pgfmathsetmacro{\rtx}{8.5}                              %
    \pgfmathsetmacro{\rtxBpad}{0.0}                          %
    \pgfmathsetmacro{\rtxB}{\rtx + \rtxBpad}                 %
    \pgfmathsetmacro{\gap}{1.0}                              %
    \pgfmathsetmacro{\txA}{\rtx}                             %
    \pgfmathsetmacro{\divx}{\rtx + 0.8 + \gap}              %
    \pgfmathsetmacro{\bshift}{\divx + \gap}                  %

    \begin{scope}[shift={(0,0)}]

        \node[font=\gfiglabelsize\sffamily\bfseries, text=black!70]
            at (-0.5, 3.5) {(a)\phantomsubcaption\label{fig:teaser_gaussian_a}};

        \fill[srccolor!12]
            (0,-2.8) -- plot[domain=-2.8:2.8, samples=100, smooth]
            ({gauss(\x,0,0.75,2.0)}, \x) -- (0,2.8) -- cycle;
        \draw[srccolor!80!black, line width=1.2pt]
            plot[domain=-2.8:2.8, samples=100, smooth]
            ({gauss(\x,0,0.75,2.0)}, \x);
        \draw[srccolor!80!black, line width=0.6pt] (0,-2.8) -- (0,2.8);
        \node[srccolor!80!black, font=\gfigdistsize] at (0.5, 3.1)
            {$z \sim \mathcal{N}(0, I)$};

        \fill[tgtcolor!12]
            (\txA,-2.8) -- plot[domain=-2.8:2.8, samples=200, smooth]
            ({\txA + hfmog(\x)}, \x)
            -- (\txA,2.8) -- cycle;
        \draw[tgtcolor!80!black, line width=1.2pt]
            plot[domain=-2.8:2.8, samples=200, smooth]
            ({\txA + hfmog(\x)}, \x);
        \draw[tgtcolor!80!black, line width=0.6pt] (\txA,-2.8) -- (\txA,2.8);
        \node[tgtcolor!80!black, font=\gfigdistsize] at (\txA+0.3, 3.1)
            {$x \sim p_{\text{data}}$};

        \foreach \ys/\yt in {
            -1.5/ 0.4,
            -0.6/ 1.4,
             0.0/-1.3,
             0.5/ 2.0,
             1.1/-1.8,
             1.7/-0.2%
        } {
            \pgfmathsetmacro{\sx}{gauss(\ys,0,0.75,2.0)}
            \pgfmathsetmacro{\ex}{\txA + hfmog(\yt)}
            \fill[srccolor!80!black] (\sx, \ys) circle (2.2pt);
            \fill[tgtcolor!80!black] (\ex, \yt) circle (2.2pt);
            \draw[->, arrowgray, line width=0.9pt]
                ({\sx+0.1}, \ys) -- ({\ex-0.1}, \yt);
        }

        \node[font=\gfigbotsize\sffamily\bfseries, text=labelRedG]
            at ({\txA/2}, -3.4) {\strut Crossing Trajectories};

    \end{scope}

    \begin{scope}[shift={(\bshift,0)}]

        \node[font=\gfiglabelsize\sffamily\bfseries, text=black!70]
            at (-0.5, 3.5) {(b)\phantomsubcaption\label{fig:teaser_gaussian_b}};

        \fill[srccolor!12]
            (0,-2.8) -- plot[domain=-2.8:2.8, samples=100, smooth]
            ({gauss(\x,0,0.75,1.2)}, \x) -- (0,2.8) -- cycle;
        \draw[srccolor!80!black, line width=1.2pt]
            plot[domain=-2.8:2.8, samples=100, smooth]
            ({gauss(\x,0,0.75,1.2)}, \x);
        \draw[srccolor!80!black, line width=0.6pt] (0,-2.8) -- (0,2.8);
        \node[srccolor!80!black, font=\gfigdistsize] at (0.5, 3.1)
            {$z \sim \mathcal{N}(0, I)$};

        \pgfmathsetmacro{\px}{1.8}
        \fill[projcolor!12]
            (\px,-2.8) -- plot[domain=-2.8:2.8, samples=100, smooth]
            ({\px + lfenv(\x)}, \x) -- (\px,2.8) -- cycle;
        \draw[projcolor!80!black, line width=1.2pt]
            plot[domain=-2.8:2.8, samples=100, smooth]
            ({\px + lfenv(\x)}, \x);
        \draw[projcolor!80!black, line width=0.6pt] (\px,-2.8) -- (\px,2.8);
       \node[projcolor!80!black, font=\gfigdistsize, align=center]
            at ({\px+0.3}, 3.1)
            {$\tilde{p}_0$};

        \fill[tgtcolor!12]
            (\rtxB,-2.8) -- plot[domain=-2.8:2.8, samples=200, smooth]
            ({\rtxB + hfmog(\x)}, \x)
            -- (\rtxB,2.8) -- cycle;
        \draw[tgtcolor!80!black, line width=1.2pt]
            plot[domain=-2.8:2.8, samples=200, smooth]
            ({\rtxB + hfmog(\x)}, \x);
        \draw[tgtcolor!80!black, line width=0.6pt] (\rtxB,-2.8) -- (\rtxB,2.8);
        \node[tgtcolor!80!black, font=\gfigdistsize] at (\rtxB+0.3, 3.1)
            {$x \sim p_{\text{data}}$};

        \foreach \ys/\yp in {
            -1.5/ 0.9,
            -0.6/ 1.4,
             0.0/-1.8,
             0.5/ 2.0,
             1.1/-0.7,
             1.7/-0.2%
        } {
            \pgfmathsetmacro{\sx}{gauss(\ys,0,0.75,1.2)}
            \pgfmathsetmacro{\gx}{\px + lfenv(\yp)}
            \fill[srccolor!80!black] (\sx, \ys) circle (2.2pt);
            \fill[projcolor!80!black] (\gx, \yp) circle (2.2pt);
            \draw[->, arrowgray, line width=0.9pt]
                ({\sx+0.1}, \ys) -- ({\gx-0.1}, \yp);
        }

        \foreach \yt/\yp in {
            -1.8/-1.8,
            -0.7/-0.7,
            -0.2/-0.2,
             0.9/ 0.9,
             1.4/ 1.4,
             2.0/ 2.0%
        } {
            \pgfmathsetmacro{\ex}{\rtxB + hfmog(\yt)}
            \pgfmathsetmacro{\gx}{\px + lfenv(\yp)}
            \fill[tgtcolor!80!black] (\ex, \yt) circle (2.2pt);
            \draw[->, projcolor!70!black, line width=1.2pt]
                ({\ex-0.1}, \yt) -- ({\gx+0.1}, \yp);
        }
        
        \pgfmathsetmacro{\otmid}{(\px + \rtxB) / 2 + 0.8}  %
        \node[font=\gfigbotsize\sffamily\bfseries, text=labelGreenG]
            at (\otmid, -3.4) {\strut OT by Projection};
        \node[font=\gfigbotsize\sffamily\bfseries, text=srccolor!80!black]
            at ({\px/2}, -3.4) {\strut Coarse Generation};

    \end{scope}

    \draw[black!20, line width=0.5pt, dashed] (\divx, -3.8) -- (\divx, 4.0);

    \end{tikzpicture}%
    }%
\caption{
\textbf{Why Solve OT When You Can Design It?}
\textbf{(a)} Random noise-data pairing in standard flow matching~\cite{lipman2022flow} induces crossing trajectories, 
motivating an OT coupling.
\textbf{(b)} Projecting samples onto their low-frequency representation
 defines an intermediate distribution $\tilde{p}_0$ whose coupling to the data is OT-optimal by design,
  while the mapping from Gaussian noise to $\tilde{p}_0$ reduces to coarse generation.
}
    \label{fig:teaser_gaussian}
\end{figure}

%% file: sec/2_related.tex
\section{Related work}
\label{sec:related}

Flow matching~\cite{lipman2022flow,albergo2022building,liu2022flow}
trains continuous normalizing flows~\cite{chen2018neural},
enabling simulation-free optimization on linear interpolation paths,
and now powers state-of-the-art image
synthesis~\cite{flux2024,esser2024scaling}.
Pairing noise and data independently during training
produces curved generation
trajectories~\cite{tong2024improving,pooladian2023multisample}
that demand many Ordinary Differential Equation (ODE) solver steps at inference.
Two complementary lines of work address this,
improving the noise-data coupling (\cref{sec:related_straight}),
and replacing  the pure-noise prior
with more informative starting distributions (\cref{sec:rw_priors}).
Our work connects them,
showing that a simple choice of prior
can simultaneously yield OT couplings
and straight trajectories.

\subsection{Straightening generation trajectories}
\label{sec:related_straight}

\myparagraph{Minibatch OT coupling.}
These methods~\cite{pooladian2023multisample,tong2024improving}
solve an OT problem within each training batch.
The resulting coupling is locally straighter
but biased~\cite{kornilov2024optimal,mousavi2025fitting},
and per-batch assignment scales poorly with batch size and dimension~\cite{zhang2025fitting,mousavi2025fitting}.
A recent variant~\cite{lin2025optimaltransportmodelalignedcoupling}
selects pairs by model prediction error rather than geometric cost,
but remains within single batches.

\myparagraph{Global OT coupling.}
Several methods compute a coupling across the full dataset.
One propagates assignments across overlapping batches during training~\cite{davtyan2025faster},
while AlignFlow~\cite{kong2025alignflow} and SD-FM~\cite{mousavi2025flow}
solve a semi-discrete OT problem offline.
These yield higher-quality couplings than minibatch OT,
but require dataset-wide precomputation
and tie assignments to specific training samples
rather than learning a source distribution.

\myparagraph{Post-training straightening.}
A separate line of work straightens trajectories
after the base model has been trained,
through iterative self-distillation~\cite{liu2022flow},
guided distillation~\cite{seongbalanced},
or action minimization~\cite{yue2025oat}.
These methods are complementary to the choice of coupling
and can be applied on top of any training-time method,
including ours,
to further refine trajectories.

\myparagraph{Few-step generation via modified objectives.}
Consistency models~\cite{pmlr-v202-song23a,yang2024consistency},
shortcut models~\cite{frans2025one},
and MeanFlow~\cite{geng2025mean}
modify the training objective itself for few-step or single-step generation.
Because they leave the noise-data coupling unchanged,
they are complementary to our prior design and can be composed with it,
as we demonstrate by integrating our prior into MeanFlow in \cref{sec:latent_results}.

\input{tables/related}
\subsection{Alternative priors for generation}
\label{sec:rw_priors}
The standard isotropic Gaussian prior discards all data structure.
A growing body of work replaces or augments this pure-noise prior with degradations that retain partial information about the target.
Cold Diffusion~\cite{bansal2023cold} showed that purely deterministic degradations
such as blur, downsampling, and masking
suffice for generation when inverted by a learned model.
Subsequent work formalized frequency-aware destruction~\cite{hoogeboom2023blurring},
interpolated between blur and noise~\cite{daras2023soft},
and corrupted Fourier magnitude while preserving spatial phase~\cite{zeng2025neuralremaster}.
Anisotropic Gaussians capturing local manifold geometry~\cite{bamberger2025carr}
and time-varying blue noise~\cite{huang2024blue}
have similarly been shown to improve flow matching and diffusion respectively.
In the conditional setting,
priors reflecting class or prompt structure shorten transport paths~\cite{issachar2025designing, kim2026better}.

Our work shares this motivation,
but rather than designing noise distributions or degradation processes,
we show that the choice of prior can be made so that its identity coupling
to the data is empirically OT-optimal,
yielding an informative starting point
and straighter trajectories
without solving or approximating an OT problem.
\cref{tab:related_comparison} summarizes the properties
of these methods, showing that our method operates within the standard flow matching framework
without requiring any OT solver.

%% file: tables/related.tex
\begin{wraptable}[14]{R}{0.6\linewidth}
\centering
\vspace{-0.4cm}
\caption{\textbf{Trade-Offs in Trajectory Straightening.}
\textit{From Scratch}: no pre-trained model required.
\textit{OT-Based}: coupling derived from optimal transport.
\textit{Scales w/ $B$}: no per-batch OT solve.
\textit{Scales w/ $N$}: no dataset-wide precomputation.
\textit{Dataset-agnostic}: no assignment tied to training samples.
 }
\resizebox{1.0\linewidth}{!}{%
\begin{tabular}{@{}lccccc@{}}
\toprule
 & \makecell{From\\Scratch}
 & \makecell{OT-\\Based}
 & \makecell{Scales w/\\batch $B$}
 & \makecell{Scales w/\\data $N$}
 & \makecell{Dataset-\\agnostic} \tabularnewline
\midrule
OT-FM / Multisample FM~\cite{tong2024improving,pooladian2023multisample}
  & \cmark & \cmark & \xmark & \cmark & \cmark \tabularnewline
OFM~\cite{kornilov2024optimal}
  & \cmark & \cmark & \xmark & \xmark & \xmark \tabularnewline
LOOM-CFM~\cite{davtyan2025faster}
  & \cmark & \cmark & \xmark & \cmark & \xmark \tabularnewline
AlignFlow~\cite{kong2025alignflow}
  & \cmark & \cmark & \cmark & \xmark & \xmark \tabularnewline
SD-FM~\cite{mousavi2025flow}
  & \cmark & \cmark & \cmark & \xmark & \xmark \tabularnewline
Lee \etal~\cite{lee2023minimizing}
  & \cmark & \xmark & \cmark & \cmark & \cmark \tabularnewline
Reflow~\cite{liu2022flow}
  & \xmark & \xmark & \cmark & \cmark & \cmark \tabularnewline
Conic Reflow~\cite{seongbalanced}
  & \xmark & \xmark & \cmark & \cmark & \cmark \tabularnewline
OAT-FM~\cite{yue2025oat}
  & \xmark & \cmark & \cmark & \cmark & \cmark \tabularnewline
\textbf{Ours}
  & \cmark & \cmark & \cmark & \cmark & \cmark \tabularnewline
\bottomrule
\end{tabular}%
}
\label{tab:related_comparison}
\end{wraptable}

%% file: sec/3_background.tex
\section{Background}
\label{sec:background}

\subsection{Flow matching}

Let $p_0$ and $p_1$ denote continuous distributions over $\mathbb{R}^d$,
serving as the base (noise) and data distributions, respectively.
Flow matching~\cite{lipman2022flow,albergo2022building,liu2022flow} learns a time-dependent flow $\phi_t: \mathbb{R}^d \rightarrow \mathbb{R}^d$ for $t \in [0, 1]$, 
which maps each sample $x_0 \sim p_0$ along a continuous path such that $\phi_0(x_0) = x_0$ and  $\phi_1(x_0) \sim p_1$,
\ie, $\phi_1$ pushes $p_0$ forward to $p_1$.
For brevity, we denote $x_t = \phi_t(x_0)$. 
The evolution of $x_t$ is governed by an Ordinary Differential Equation (ODE):
\begin{equation}
\label{eq:fm_ode}
    \frac{d}{dt}x_t = u_t(x_t),
\end{equation}
where $u_t: \mathbb{R}^d \rightarrow \mathbb{R}^d$, for $t \in [0,1]$,
is the velocity field that generates the flow.
The goal of flow matching is to learn a neural velocity field $v_\theta(x_t, t)$
that approximates $u_t(x_t)$.

Directly regressing onto $u_t$ is intractable,
but an equivalent \emph{conditional} objective
can be optimized instead~\cite{lipman2022flow}.
Under a linear interpolation path:
\begin{equation}
    \label{eq:fm_path}
    \phi_t(x_0 | x_1) = x_t = t x_1 + (1-t)x_0,
\end{equation}
where $x_0 \sim p_0, x_1 \sim p_1$ are sampled independently.
The corresponding conditional velocity is $u_t(x_t | x_1) = x_1 - x_0$,
and the training objective reduces to:
\begin{equation}
    \label{eq:fm_loss}
    \mathcal{L}(\theta) = \mathop{\mathbb{E}}\limits_{t,\, x_0,\, x_1}
    \left[\|v_\theta(x_t, t) - (x_1 - x_0)\|^2_2\right],
\end{equation}
where $t \sim \mathcal{U}[0, 1], x_0 \sim p_0, x_1 \sim p_1$.
At inference, new samples are generated by solving the ODE in \cref{eq:fm_ode}
from $t=0$ to $t=1$, initialized from $x_0 \sim p_0$.

While each conditional velocity $u_t(x_t | x_1) = x_1 - x_0$ defines a straight path
between a specific pair $(x_0, x_1)$,
the independent sampling of $x_0$ and $x_1$ in \cref{eq:fm_loss} means that
these paths may cross one another~(\cref{fig:teaser_gaussian_a})~\cite{geng2025mean,pooladian2023multisample}.
The learned velocity field $v_\theta$ must average over such conflicting directions,
resulting in curved trajectories at inference time.
Straightening these trajectories is desirable,
as it reduces the number of ODE solver steps required for generation.

\subsection{Optimal transport for straightening trajectories}
Given a map $T: \mathbb{R}^d \rightarrow \mathbb{R}^d$, 
we denote the push-forward of $p_1$ by $T$ as $T_\# (p_1)$, \ie, the distribution of $T(x)$ for $x \sim p_1$.
The Monge optimal transport problem
seeks a map that transports $p_1$ to $p_0$
with minimal squared Euclidean cost: 
\begin{equation}
    \label{eq:ot}
    \inf_{T:\, T_\#(p_1) = p_0}
    \mathbb{E}_{x_1 \sim p_1} \left[ \|T(x_1) - x_1\|^2 \right].
\end{equation}

In the context of flow matching, replacing the independent coupling
in \cref{eq:fm_loss} with the OT-optimal pairing eliminates path crossings,
since any crossing could be ``uncrossed''
to reduce the total cost, contradicting optimality.
The resulting velocity field produces
significantly straighter
trajectories~\cite{pooladian2023multisample,tong2024improving},
reducing the number of ODE steps needed at inference.

However, computing the OT coupling
between a high-dimensional data distribution
and a standard Gaussian prior is intractable.
While the optimal map is guaranteed to
exist~\cite{brenier1991polar},
obtaining it requires $O(n^3)$ time
and $O(n^2)$ memory in the number of
samples~\cite{kuhn1955hungarian,NIPS2013_af21d0c9},
rendering it infeasible at modern data scales.
As discussed in \cref{sec:intro,sec:related},
existing methods attempt to solve or approximate
this coupling at significant cost.
Next,
we show it can be  
sidestepped
by designing a prior whose identity coupling to the data
is OT-optimal by design.

%% file: sec/4_method.tex
\section{Method}
\label{sec:method}

Our approach replaces the standard Gaussian prior
with a structured distribution
obtained by projecting each training image
onto its low-frequency representation.
\cref{fig:pipeline} illustrates the full pipeline.
We first describe the prior design (\cref{sec:ot_by_design}),
then the training and inference procedure (\cref{sec:pipeline}).

\input{figs/pipeline/pipeline}
\subsection{Designing the OT prior}
\label{sec:ot_by_design}

Rather than solving the OT problem for a fixed prior,
we treat the prior itself as a design choice.
Given the data distribution $p_1$,
we seek a transformation $T: \mathbb{R}^d \to \mathbb{R}^d$
that defines the prior as the distribution it induces,
$\tilde{p}_0 = T_{\#}(p_1)$,
with two design goals.
(i) The identity coupling between data and prior
should be OT-optimal,
yielding straight, non-crossing trajectories
for the flow model.
Under the squared-cost OT objective of \cref{eq:ot},
the identity coupling pays
the mean squared reconstruction error of $T$,
so this goal favors $T$
with small reconstruction error on $p_1$.
We discuss strict orthogonal projections,
for which the identity coupling is exactly OT-optimal,
in \suppcref{sec:supp_projection_variants},
with \suppcref{rem:parseval}
connecting these to the downsampling operators
used in the pipeline.
(ii) The prior should be expressive enough
to support generation of $p_1$,
yet structured enough to be sampled
by a lightweight model at inference.
The two goals are not equivalent,
and (i) alone is not sufficient,
as we show below.

We argue for low-frequency projections of natural images.
The power spectrum of natural images
follows a $1/f^2$
decay~\cite{field1987relations,ruderman1994statistics},
so low-pass filtering retains most of each image's $\ell_2$ energy
and incurs only small reconstruction error,
addressing goal (i).
The resulting prior consists of low-resolution images
capturing coarse structure,
simple enough for a lightweight model to sample,
addressing (ii).

To check that the two goals are genuinely independent,
we compare four projection variants for $T$.
Fourier truncation, random pixel masking, and random patch masking
are strict orthogonal projections
and therefore provably preserve the OT-identity coupling
(\suppcref{prop:ot_discrete}).
Only the low-frequency variants
yield low FID and low trajectory curvature
under noise perturbation
(\suppcref{tab:projection_variants}).
OT-optimality of the identity coupling
is therefore not sufficient,
the prior must also retain low-frequency structure
that a lightweight model can sample efficiently.
Among the low-frequency variants,
bicubic downsampling yields a small but consistent FID improvement
over Fourier truncation,
and we adopt it.

The resulting design relates to a recurring principle
in generative modeling.
Cascaded diffusion~\cite{ho2022cascaded,saharia2022photorealistic},
latent diffusion~\cite{rombach2022stable_diffusion},
and pyramid flow~\cite{jin2025pyramidal}
all decompose generation across resolutions or feature scales,
generating coarse structure first
and refining it with high-frequency detail.
Our contribution is to make this decomposition explicit
at the level of the flow's coupling.
The flow model never has to invent coarse structure,
its starting point already contains it,
and the residual task of synthesizing high-frequency detail
is exactly the regime where the OT-identity coupling holds.

\input{figs/swaps_bars/swaps_bars}

The deterministic coupling defined by $T$
pairs each $x_1$ with a single fixed $T(x_1)$.
While this is the exact coupling we sought,
the resulting prior concentrates
on a low-dimensional subspace,
where nearby prior samples
can be matched to distant data points.
The velocity field is then forced
to average over conflicting transport directions
within small neighborhoods,
producing curved trajectories at inference time.
This echoes the observation in score-based
generative modeling~\cite{song2019generative}
that learning a stable vector field
over data concentrated on a low-dimensional manifold
is difficult.
Deterministic coupling also exposes the model
to the same pair $(T(x_1), x_1)$ every epoch,
risking overfitting.
Adding noise alleviates both issues
by spreading the prior
and breaking the pairing.
We formalize this by interpolating
the low-frequency representation and Gaussian noise:
\begin{equation}
    \label{eq:noisy_prior}
    x_0 = (1 - \alpha)\, T(x_1) + \alpha\, \epsilon,
    \quad \epsilon \sim \mathcal{N}(0, I),
\end{equation}
where $\alpha \in [0,1]$ controls the noise ratio.
$\alpha = 0$ recovers the deterministic coupling, 
and $\alpha = 1$ reduces to standard flow matching
with random pairings.

The expected squared distance
between two noisy prior samples decomposes as
\begin{equation}
    \label{eq:noisy_distance}
    \mathbb{E}\left[\|x_0^{(i)} - x_0^{(j)}\|^2\right]
    = (1 - \alpha)^2\, \mathbb{E}\left[\|T(x_1^{(i)}) - T(x_1^{(j)})\|^2\right]
    + \alpha^2\, \mathbb{E}\left[\|\epsilon_i - \epsilon_j\|^2\right].
\end{equation}
The first term is the mean pairwise distance
in the projected subspace,
while the second evaluates to $2\alpha^2 d$,
growing linearly with the data dimensionality,
so even moderate noise levels provide substantial separation
between prior samples (\suppcref{fig:ot:pairwise_dist_noise}).

A natural question is whether the OT-identity coupling
survives noise interpolation.
To test this, we compute the exact discrete OT assignment (Hungarian algorithm)
between $10{,}000$ noisy prior samples
and their corresponding data samples at different noise levels,
and measure the fraction of pairs
whose optimal assignment matches the identity coupling.
\cref{fig:ot:noise_vs_ot_swaps} shows
that the identity coupling remains OT-optimal 
up to $\alpha \approx 0.5$ (shown as $50\%$ noise ratio in the figure),
in both pixel space (CIFAR-10, $99.9\%$ preservation) and latent space (FFHQ, $100\%$), 
with the sample images above the bars
showing progression in $\alpha$.
Only beyond $\alpha = 0.5$ does noise disrupt the assignment structure.
\cref{sec:supp_pairwise}
extends this analysis with a finer $\alpha$ sweep
and the full derivation of \cref{eq:noisy_distance}.

Moderate noise preserves the OT coupling
and improves generation quality,
with FID minimized near $\alpha = 0.5$.
Quality improves steadily
from $\alpha = 0$ to $\alpha = 0.5$,
then degrades as noise overwhelms
the low-frequency structure.
At lower values, prior samples cluster tightly
in the projected subspace,
risking overfitting and trajectory crossings.
The $\alpha = 0$ endpoint
is itself a meaningful baseline,
a pure coarse-to-fine pipeline
in which $v_\theta$ refines a deterministic projection prior
without any noise interpolation.
The FID gap to $\alpha = 0.5$ in \suppcref{fig:ot:fid_vs_noise}
thus measures the contribution
of the OT-preserving noise interpolation
beyond a coarse-to-fine substrate alone.
We adopt $\alpha = 0.5$ as the operating point
for all our experiments
(full FID sweep in \suppcref{fig:ot:fid_vs_noise}).

\subsection{Pipeline and training}
\label{sec:pipeline}
We instantiate $T = \mathcal{U} \circ \mathcal{D}$,
where $\mathcal{D} \colon \mathbb{R}^d \to \mathbb{R}^{d'}$ produces
$x_1^{\downarrow} = \mathcal{D}(x_1) \in \mathbb{R}^{d'}$,
and $\mathcal{U}: \mathbb{R}^{d'} \to \mathbb{R}^d$
is its corresponding upsampling,
so that $T$ maps $x_1$
to its low-frequency approximation in $\mathbb{R}^d$.
During training (\cref{fig:pipeline_a}),
$T(x_1)$ is computed directly from each data sample
and perturbed via \cref{eq:noisy_prior}
to form the starting point $x_0$,
which is paired with $x_1$
for standard flow matching training of $v_\theta$ (see \suppcref{alg:coupling}).
At inference (\cref{fig:pipeline_c}),
a lightweight generator $G_\phi$
maps Gaussian noise to $\mathbb{R}^{d'}$.
In practice, $G_\phi$ is itself a flow matching model
operating at a small fraction of the original dimensionality
(architecture details in \cref{sec:experiments}),
trained separately (\cref{fig:pipeline_b}) to produce $\hat{x}_1^{\downarrow}$,
which is deterministically upsampled
to $\mathcal{U}(\hat{x}_1^{\downarrow})$
and perturbed with noise.
The flow model $v_\theta$, under the OT-identity coupling,
synthesizes the remaining high-frequency details,
with no OT problem to solve
and no architectural changes.

%% file: figs/pipeline/pipeline.tex
\begin{figure}[ht]
    \centering

    \def\boxpadtop{6.5pt}       %
    \def\boxpadbot{22.5pt}      %
    \def\rowgap{0.00cm}       %
    \def\imgw{1.0cm}          %
    \def\imgwsm{0.5cm}        %
    \def\bcgap{0.2cm}         %
    \def\apadtop{1pt}         %
    \def\apadright{2pt}       %

    \scalebox{0.91}{%
    \begin{tikzpicture}[
        node distance=0.5cm and 0.4cm,
        img/.style={inner sep=0pt},
        op/.style={draw, rounded corners, fill=gray!10, minimum height=0.6cm, minimum width=0.8cm, font=\scriptsize},
        arrow/.style={->, >=stealth, thick},
        label/.style={font=\scriptsize, text=black},
    ]

    \node[img] (highres) {\includegraphics[width=\imgw]{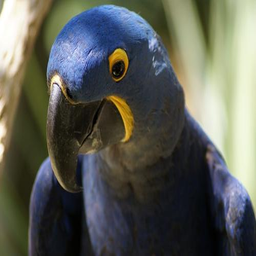}};
    \node[label, below=0.05cm of highres] {$x_1 \sim p_1$};

    \node[op, right=0.5cm of highres] (lowpass) {$\mathcal{D}$};

    \node[img, right=0.5cm of lowpass] (lowres) {\includegraphics[width=\imgwsm]{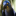}};
    \node[label, below=0.05cm of lowres] {$x_1^{\downarrow}$};

    \node[op, right=0.5cm of lowres] (upsample) {$\mathcal{U}$};

    \node[img, right=0.5cm of upsample] (low2high) {\includegraphics[width=\imgw]{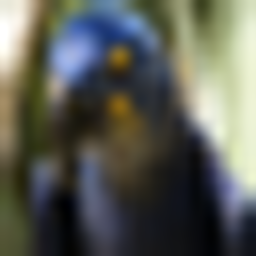}};
    \node[label, below=0.05cm of low2high] {$\mathcal{U}(x_1^{\downarrow})$};

    \node[op, right=0.5cm of low2high] (addnoise) {$\alpha$};

    \node[img, right=0.5cm of addnoise] (noisylow2high) {\includegraphics[width=\imgw]{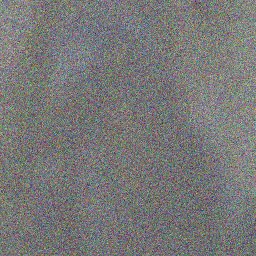}};
    \node[label, below=0.05cm of noisylow2high] {$x_0$};

    \node[op, right=0.55cm of noisylow2high, fill=blue!10, minimum width=1.5cm, minimum height=0.7cm] (fm) {$v_\theta(x_t, t)$};

    \node[img, right=0.55cm of fm] (output) {\includegraphics[width=\imgw]{figs/pipeline/training_imgs/high_res.png}};
    \node[label, below=0.05cm of output] {$x_1$};

    \draw[arrow] (highres) -- (lowpass);
    \draw[arrow] (lowpass) -- (lowres);
    \draw[arrow] (lowres) -- (upsample);
    \draw[arrow] (upsample) -- (low2high);
    \draw[arrow] (low2high) -- (addnoise);
    \draw[arrow] (addnoise) -- (noisylow2high);
    \draw[arrow] (noisylow2high) -- (fm);
    \draw[arrow] (fm) -- (output);

    \coordinate (fs) at ([shift={(-18.5pt,2pt)}]current bounding box.south west);
    \coordinate (fn) at ([shift={(\apadright,\apadtop)}]current bounding box.north east);
    \draw[dashed, traincolor!60, rounded corners=4pt] (fs) rectangle (fn);
    \node[font=\scriptsize\bfseries, text=traincolor!70, fill=white, inner sep=2pt, anchor=south west] at (fs) {(a)};

    \end{tikzpicture}%
    }%

    \vspace{\rowgap}

    \scalebox{0.91}{%
    \begin{tikzpicture}[
        node distance=0.3cm and 0.4cm,
        img/.style={inner sep=0pt},
        op/.style={draw, rounded corners, fill=gray!10, minimum height=0.5cm, minimum width=0.6cm, font=\scriptsize},
        arrow/.style={->, >=stealth, semithick},
        label/.style={font=\scriptsize, text=black},
    ]

    \node[label] (pz) {$z \sim p_0$};
    \node[op, right=0.35cm of pz, fill=green!10] (pgen) {$G_\phi$};
    \node[img, right=0.4cm of pgen] (pout) {\includegraphics[width=\imgwsm]{figs/pipeline/training_imgs/low_res.png}};
    \node[label, below=0.02cm of pout] {$x_1^{\downarrow}$};

    \draw[arrow] (pz) -- (pgen);
    \draw[arrow] (pgen) -- (pout);

    \node[label, right=\bcgap of pout] (z) {$z \sim \mathcal{N}$};

    \node[op, right=0.3cm of z, fill=green!10] (gen) {$G_\phi$};

    \node[img, right=0.3cm of gen] (lowres) {\includegraphics[width=\imgwsm]{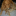}};
    \node[label, below=0.02cm of lowres] {$\hat{x}_1^{\downarrow}$};

    \node[op, right=0.3cm of lowres] (upsample) {$\mathcal{U}$};

    \node[img, right=0.3cm of upsample] (low2high) {\includegraphics[width=\imgw]{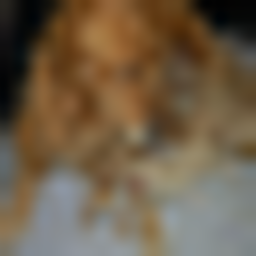}};
    \node[label, below=0.02cm of low2high] {$\mathcal{U}(\hat{x}_1^{\downarrow})$};

    \node[op, right=0.3cm of low2high] (addnoise) {$\alpha$};

    \node[img, right=0.3cm of addnoise] (x0) {\includegraphics[width=\imgw]{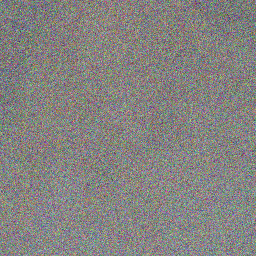}};
    \node[label, below=0.02cm of x0] {$x_0$};

    \node[op, right=0.3cm of x0, fill=blue!10, minimum width=1.3cm, minimum height=0.6cm] (fm) {$v_\theta(x_t, t)$};
    \node[label, below=0.02cm of fm] {\tiny solve ODE};

    \node[img, right=0.3cm of fm] (x1) {\includegraphics[width=\imgw]{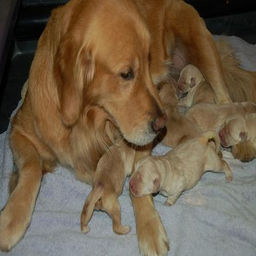}};
    \node[label, below=0.02cm of x1] {$x_1$};

    \draw[arrow] (z) -- (gen);
    \draw[arrow] (gen) -- (lowres);
    \draw[arrow] (lowres) -- (upsample);
    \draw[arrow] (upsample) -- (low2high);
    \draw[arrow] (low2high) -- (addnoise);
    \draw[arrow] (addnoise) -- (x0);
    \draw[arrow] (x0) -- (fm);
    \draw[arrow] (fm) -- (x1);

    \coordinate (bfn_raw) at ([shift={(2pt,\boxpadtop)}]pgen.north -| pout.east);
    \coordinate (cfn_raw) at ([shift={(2pt,\boxpadtop)}]fm.north -| x1.east);
    \path let \p1=(bfn_raw), \p2=(cfn_raw) in
        coordinate (bfn) at (\x1, {max(\y1,\y2)})
        coordinate (cfn) at (\x2, {max(\y1,\y2)});

    \coordinate (bfs) at ([shift={(-2pt,-\boxpadbot)}]pz.south west);
    \draw[dashed, traincolor!60, rounded corners=4pt] (bfs) rectangle (bfn);
    \node[font=\scriptsize\bfseries, text=traincolor!70, fill=white, inner sep=2pt, anchor=south west] at (bfs) {(b)};

    \coordinate (cfs) at ([shift={(-2pt,-\boxpadbot)}]z.south west);
    \draw[dashed, infercolor!60, rounded corners=4pt] (cfs) rectangle (cfn);
    \node[font=\scriptsize\bfseries, text=infercolor!70, fill=white, inner sep=2pt, anchor=south west] at (cfs) {(c)};

    \end{tikzpicture}%
    }%

    \makebox[0pt]{%
      \phantomsubcaption\label{fig:pipeline_a}%
      \phantomsubcaption\label{fig:pipeline_b}%
      \phantomsubcaption\label{fig:pipeline_c}%
    }

    \caption{%
        \textbf{Full Pipeline.}
        \textbf{(a)}~During {\color{traincolor}training}, data samples are projected to low frequencies ($\mathcal{D}$),
        upsampled ($\mathcal{U}$), and perturbed with noise~(\cref{eq:noisy_prior}) to form $x_0$, 
        paired with $x_1$ to train $v_\theta$.
        \textbf{(b)}~$G_\phi$ is {\color{traincolor}trained} to map Gaussian noise to the
        low-frequency distribution.
        \textbf{(c)}~At {\color{infercolor}inference}, $G_\phi$ produces a low-frequency sample used to construct a starting point for generation, solving the ODE using $v_\theta$.
    }
    \label{fig:pipeline}
\end{figure}

%% file: figs/swaps_bars/swaps_bars.tex
\begin{wrapfigure}[18]{R}{0.55\linewidth}
\centering
\definecolor{imgoutline}{HTML}{2E6FAC}
\def\figwidth{0.95\linewidth}  %
\def\imgsize{0.72cm}            %
\def\imgstartX{1.5}           %
\def\imggap{1.255}               %
\def\origX{0.55}                %
\def\imgmargin{-2pt}            %
\def\imgoutlineW{1.5pt}         %
\begin{tikzpicture}[
    img/.style={draw=imgoutline, line width=\imgoutlineW, inner sep=0pt},
    lbl/.style={font=\scriptsize, anchor=north, yshift=-1pt},
    albl/.style={font=\tiny\sffamily, anchor=south, yshift=1pt},
]
\node[inner sep=0pt, anchor=south west] (bc) at (0,0)
    {\includegraphics[width=\figwidth]{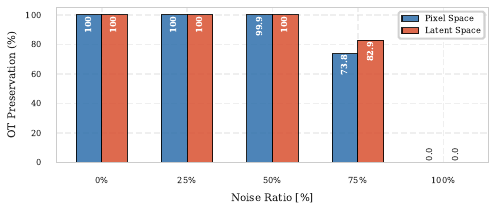}};
\coordinate (imgbase) at ([yshift=\imgmargin]bc.north west);
\coordinate (origpos) at (\origX, 0);
\coordinate (i0pos) at (\imgstartX, 0);
\coordinate (i1pos) at ({\imgstartX+\imggap}, 0);
\coordinate (i2pos) at ({\imgstartX+2*\imggap}, 0);
\coordinate (i3pos) at ({\imgstartX+3*\imggap}, 0);
\coordinate (i4pos) at ({\imgstartX+4*\imggap}, 0);
\node[img, anchor=south] (orig) at (origpos |- imgbase)
    {\includegraphics[width=\imgsize]{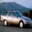}};
\node[albl] at (orig.north) {$x_1$};
\node[img, anchor=south] (i0) at (i0pos |- imgbase)
    {\includegraphics[width=\imgsize]{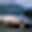}};
\node[albl] at (i0.north) {$\alpha{=}0$};
\node[img, anchor=south] (i1) at (i1pos |- imgbase)
    {\includegraphics[width=\imgsize]{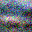}};
\node[albl] at (i1.north) {$\alpha{=}.25$};
\node[img, anchor=south] (i2) at (i2pos |- imgbase)
    {\includegraphics[width=\imgsize]{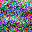}};
\node[albl] at (i2.north) {$\alpha{=}.5$};
\node[img, anchor=south] (i3) at (i3pos |- imgbase)
    {\includegraphics[width=\imgsize]{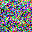}};
\node[albl] at (i3.north) {$\alpha{=}.75$};
\node[img, anchor=south] (i4) at (i4pos |- imgbase)
    {\includegraphics[width=\imgsize]{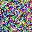}};
\node[albl] at (i4.north) {$\alpha{=}1$};
\end{tikzpicture}
\caption{
    \textbf{OT Preservation Under Noise.}
    Fraction of pairs whose optimal assignment matches the identity coupling,
    in \textit{\textcolor{pixelblue}{pixel space}} (CIFAR-10) and
    \textit{\textcolor{latentred}{latent space}} (FFHQ).
    Above the bars: a sample at each noise level.
}
\label{fig:ot:noise_vs_ot_swaps}
\end{wrapfigure}

%% file: sec/5_experiments.tex
\section{Experiments}
\label{sec:experiments}
\subsection{Experimental setup}
\label{sec:exp_setup}

\input{figs/cifar_results/cifar_results}\textbf{Datasets.}\enspace
We evaluate on three benchmarks of increasing complexity:
CIFAR-10~\cite{cifar10} ($32 \times 32$),
FFHQ~\cite{karras2019style} ($256 \times 256$),
and ImageNet~\cite{russakovsky2015imagenet} ($256 \times 256$).
CIFAR-10 is trained in pixel space, following the architecture and training recipe of Tong \etal~\cite{tong2024improving},
while
for the $256 \times 256$ datasets,
we follow standard practice and train using the SiT architecture~\cite{geng2025mean,ma2024sit},
and operate in the latent space
of a pretrained autoencoder~\cite{rombach2022stable_diffusion},
yielding $32 \times 32 \times 4$ latents.
We downsample by $4\times$
per spatial dimension,
producing an $8 \times 8$ low-frequency prior.
$G_{\phi}$ is trained on a smaller variant of the same architecture family for each experiment,
and operates on these representations.
Full architectural details, training procedures, low-frequency projection specifications,
and ablations are provided in \suppcref{sec:supp_implementation,sec:supp_ablations}.

\input{figs/cifar_generation/cifar_generation}
\myparagraph{Baselines.}
We compare against
IFM~\cite{lipman2022flow} (independent coupling),
OT-FM~\cite{tong2024improving} (minibatch OT),
and AlignFlow~\cite{kong2025alignflow}
(semi-discrete OT).
All baselines are single-network models,
and our flow model $v_\theta$ uses the same architecture
and parameter count as theirs.
The additional cost of the lightweight $G_\phi$
is reported through the effective NFE,
with full details in \suppcref{tab:cifar_arch,tab:sit_arch,sec:supp_algorithms}.
On ImageNet we evaluate
with classifier-free guidance
(CFG)~\cite{ho2022classifier_free_guidance}
and integrate our prior
into MeanFlow~\cite{geng2025mean},
a one-step generation framework.

\myparagraph{Evaluation.}
We report FID~\cite{fid} for generation quality (log scale in all plots),
and curvature~\cite{pooladian2023multisample,lee2023minimizing},
which quantifies how close the learned trajectories are to straight lines.
Our two-stage pipeline incurs a small overhead
from the lightweight generator $G_\phi$.
In wall-clock time, one $G_\phi$ step takes $\approx 0.09\times$
the time of a main-model step.
We use $8$ generation steps for $G_\phi$
in all experiments (\suppcref{sec:supp_ablation_gen_nfe})
and report \emph{effective NFE} throughout,
accounting for both stages.
For example, $1$ main-model step
yields an effective NFE of $1.72$.
We additionally report results
with a single $G_\phi$ step
(effective NFE overhead of only $0.09$)
to demonstrate robustness
to the prior-generation budget,
and \suppcref{sec:supp_oracle}
provides the perfect-prior upper bound.
Together these bracket the sensitivity of final FID
to $G_\phi$ quality.
We use $\alpha = 0.5$ throughout,
as motivated in \cref{sec:ot_by_design}.
At inference, we additionally apply
a small noise perturbation to $x_0$,
see \suppcref{sec:supp_noise_calibration} for details.

\subsection{Pixel space (CIFAR-10)}
\label{sec:cifar_results}

We begin with CIFAR-10 in pixel space.
\cref{fig:cifar_results_a} plots FID vs NFE for IFM~\cite{lipman2022flow}, OT-FM~\cite{tong2024improving}, AlignFlow~\cite{kong2025alignflow}, and our method.
Note that our effective NFE takes into account the cost of running $G_\phi$.
At all comparison points beyond the first,
our method uses fewer total function evaluations
than the baselines.
Our method outperforms IFM and OT-FM
across all step counts.
Compared to AlignFlow,
which requires a preprocessing stage to solve a semi-discrete OT problem
over the full dataset,
our method achieves better FID
at $4$ and $8$ effective NFE.
As the number of steps increases,
all methods converge,
confirming the advantage of trajectory straightening
is concentrated in the few-step regime.
We additionally report results
with a single $G_\phi$ step
(effective NFE of $1.09$),
demonstrating competitive performance
even in this minimal-cost setting.
\cref{fig:cifar_results_b} explains this improvement through trajectory curvature.
Our method achieves substantially lower curvature
than all baselines,
at least $2\times$ below AlignFlow,
the second-best method,
and $4\times$ lower than IFM.

\cref{fig:cifar_generation} provides qualitative support.
At $1$ NFE, baselines produce
visibly blurred outputs of varying quality,
as the velocity field must compress
curved trajectories into a single step.
Our method, starting from a structured low-frequency prior
($x_0$ column), already captures
the coarse layout of the target image,
and a single step suffices
to synthesize recognizable content.
By $4$ NFE, our samples exhibit
clear object structure and color fidelity,
while the baselines remain noticeably degraded.

\subsection{Latent space (FFHQ \& ImageNet)}
\label{sec:latent_results}
\input{figs/ffhq_results/ffhq_results}
\cref{fig:ffhq_results_a} reports FID
as a function of effective NFE
on FFHQ $256 \times 256$.
Our method achieves the best FID at all step counts
except NFE${} = 2$,
where our effective cost is only $1.72$ steps.
The gains are largest at $1$ step,
with $20\%$ improvement
over OT-FM and $38\%$ over IFM.
AlignFlow~\cite{kong2025alignflow} is excluded
from latent-space experiments,
as it requires precomputed OT assignments
only publicly available for CIFAR-10.
\cref{fig:ffhq_results_b} shows the mechanism
behind these improvements.
Our method reduces curvature to $0.083$,
under half of OT-FM ($0.172$)
and less than $42\%$ of IFM ($0.201$).
This confirms that our approach
extends to latent space,
where the 
OT-identity coupling
holds for the latent representations
under Hungarian verification (\cref{fig:ot:noise_vs_ot_swaps}).

\cref{fig:ffhq_low_nfe} compares samples
generated at $2$ NFE across all three methods.
Notably, our method uses an effective NFE of only $1.72$
at this setting,
yet the visual comparison reveals a clear advantage.
IFM and OT-FM produce faces
with noticeable artifacts
and incomplete facial structure.
Our method generates more coherent samples
with significantly fewer artifacts,
reflecting the advantage of starting
from a structured prior
that already captures the coarse layout.

\input{figs/ffhq_low_nfe/ffhq_low_nfe}
\noindent\textbf{ImageNet and Classifier-Free Guidance.}\enspace
To evaluate scaling to larger datasets,
we train the same SiT backbone
on class-conditional ImageNet $256 \times 256$.
Due to compute constraints,
we train for the same number of iterations
used in our other experiments
rather than to full convergence,
so absolute FID values are higher
than those of fully trained models.
The comparison is nonetheless informative,
as both methods share the same architecture
and training budget.
\cref{tab:imagenet_results} reports FID and curvature
at $1$ NFE ($1.09$ effective for our method).
Since training is partial, absolute FID values are high;
nevertheless, relative improvements are informative.
Our method lowers FID from $314.80$ to $248.42$ with CFG
and from $322.18$ to $268.14$ without CFG,
yielding even larger gains in the guided setting.
Curvature is reduced by more than $2.3\times$
compared to IFM in both configurations,
suggesting that the straighter trajectories
observed on FFHQ extend to a large scale dataset such as ImageNet.
Our approach integrates
with classifier-free guidance~\cite{ho2022classifier_free_guidance},
as both $G_\phi$ and the main model
are trained with the standard CFG scheme.
The FID improvement holds
in both settings,
confirming that our approach
complements CFG rather than substituting for it.

\noindent\textbf{One-Step Generation with MeanFlow.}\enspace
MeanFlow~\cite{geng2025mean} is a one-step
generation framework that trains from scratch
by modeling average rather than instantaneous velocity,
requiring no distillation or curriculum learning.
We integrate our prior into MeanFlow
and evaluate on ImageNet $256 \times 256$,
following the setup of Tab.~1
in the original paper~\cite{geng2025mean}.
We use the pretrained checkpoint from a popular PyTorch reproduction~\cite{meanflow_pytorch}
as our baseline, which achieves better FID than the original paper,
and train our variant from scratch using the same codebase.
As shown in \cref{tab:meanflow_no_cfg},
our method outperforms MeanFlow
at both $1$ and $2$ steps
using only $1.72$ effective NFE, while
reducing curvature by $2.3\times$,
confirming that our prior yields
substantially straighter trajectories
even in a framework
already optimized for one-step generation.
This integration illustrates
how our prior complements training-objective changes
such as consistency models~\cite{pmlr-v202-song23a,yang2024consistency}
and distillation~\cite{liu2022flow,seongbalanced},
which modify the learning objective
while our method modifies the coupling
the velocity field is trained on.

\myparagraph{Ablations.}
We ablate the projection operator (\suppcref{sec:supp_projection_variants}),
the downsampling factor (\suppcref{sec:supp_ablation_downsample}),
and the $G_\phi$ step count (\suppcref{sec:supp_ablation_gen_nfe}),
the first showing that low-frequency structure is the necessary ingredient
beyond OT-optimality alone.

\input{tables/imagenet_and_meanflow}

%% file: figs/cifar_results/cifar_results.tex
\begin{wrapfigure}[14]{R}{0.55\linewidth}
\centering
\vspace{-0.5cm}
\includegraphics[width=\linewidth]{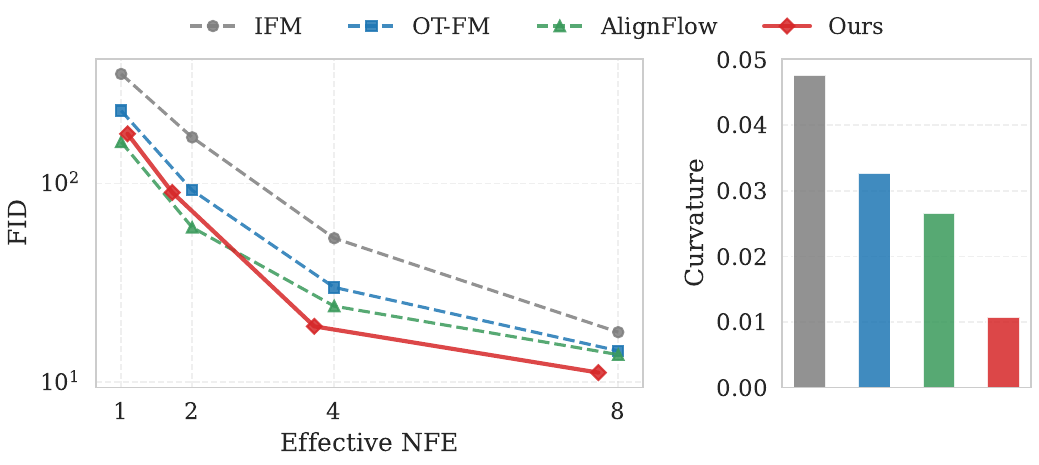}\\[-4pt]
\makebox[\linewidth]{%
  \phantomsubcaption\label{fig:cifar_results_a}
  \makebox[0.7\linewidth]{\small\bfseries (a)}%
  \phantomsubcaption\label{fig:cifar_results_b}
  \makebox[0.3\linewidth]{\small\bfseries (b)}%
}
\caption{\textbf{CIFAR-10 Results.}
Our method achieves lower FID ($\downarrow$) across most step counts \textbf{(a)}
and at least $2\times$ lower curvature ($\downarrow$) than the next best baseline \textbf{(b)}.}
\label{fig:cifar_results}%
\end{wrapfigure}

%% file: figs/cifar_generation/cifar_generation.tex
\begin{wrapfigure}[20]{R}{0.49\linewidth}
\centering
\setlength{\tabcolsep}{0.2pt}
\providecommand{\imgw}{}
\renewcommand{\imgw}{0.08\textwidth}
\providecommand{\img}[1]{}
\renewcommand{\img}[1]{\adjustbox{valign=M}{\includegraphics[width=\imgw]{#1}}}
\providecommand{\rowlbl}[1]{}
\renewcommand{\rowlbl}[1]{\adjustbox{valign=M}{\rotatebox{90}{\txtsize #1}}}
\def\rowsep{1pt}
\def\colsep{1pt}
\def\txtsize{\scriptsize}
\def\insetsep{1pt}
\providecommand{\placeholder}{}
\renewcommand{\placeholder}{\adjustbox{valign=M}{\makebox[\imgw]{\centering\txtsize ---}}}
\vspace{-0.4cm}
\adjustbox{max width=\linewidth}{%
\begin{tabular}{
  c
  @{\hspace{\colsep}} c
  @{\hspace{\colsep}} c
  @{\hspace{\insetsep}} c
  @{\hspace{\colsep}} c
  @{\hspace{\colsep}} c}
& {\txtsize $G_\phi(z)$}
& {\txtsize $x_0$}
& \multicolumn{3}{c}{\txtsize NFE} \\
& & &
{\txtsize 1} &
{\txtsize 2} &
{\txtsize 4} \\[2pt]
\rowlbl{IFM} &
\placeholder &
\img{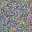} &
\img{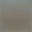} &
\img{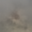} &
\img{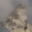} \\
\noalign{\vskip \rowsep}
\rowlbl{OT-FM} &
\placeholder &
\img{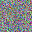} &
\img{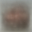} &
\img{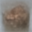} &
\img{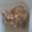} \\
\noalign{\vskip \rowsep}
\rowlbl{AlignFlow} &
\placeholder &
\img{figs/cifar_generation/cifar_low_nfe/gaussian_noise_1.png} &
\img{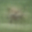} &
\img{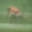} &
\img{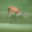} \\
\noalign{\vskip \rowsep}
\rowlbl{Ours} &
\img{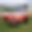} &
\img{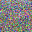} &
\img{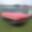} &
\img{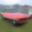} &
\img{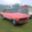} \\
\end{tabular}%
}
\caption{\textbf{CIFAR-10 Generation.}
Samples generated at $1$, $2$, and $4$ NFE.
For our method NFE counts $v_\theta$ evaluations. $G_\phi(z)$ is the generated low-frequency sample
and $x_0$ the noised starting point.
All baselines start from Gaussian noise.}
\label{fig:cifar_generation}
\end{wrapfigure}

%% file: figs/ffhq_results/ffhq_results.tex
\begin{wrapfigure}[15]{R}{0.53\linewidth}
\centering
\vspace{-0.5cm}
\includegraphics[width=\linewidth]{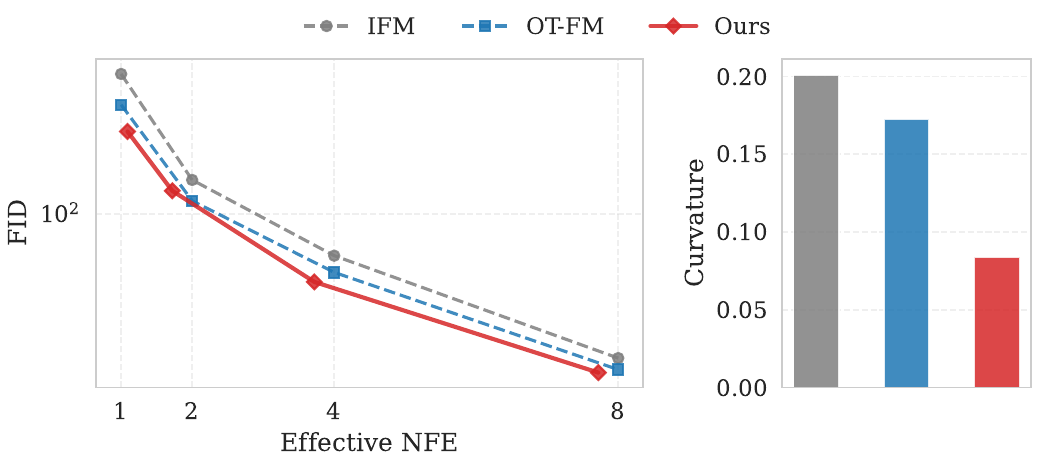}\\[-4pt]
\def\posA{3.1cm}
\def\posB{4.7cm}
\makebox[\linewidth]{%
  \phantomsubcaption\label{fig:ffhq_results_a}%
  \makebox[0.7\linewidth]{\small\bfseries (a)}%
  \phantomsubcaption\label{fig:ffhq_results_b}%
  \makebox[0.3\linewidth]{\small\bfseries (b)}%
}
\caption{
    \textbf{FFHQ Results.}
    \textbf{(a)} Our method achieves lower FID ($\downarrow$) across all steps but $2$.
    \textbf{(b)} Our method achieves at least $2\times$ lower curvature ($\downarrow$)
    than the next best baseline.}
\label{fig:ffhq_results}%
\end{wrapfigure}

%% file: figs/ffhq_low_nfe/ffhq_low_nfe.tex
\begin{figure}[ht]
\centering
\setlength{\tabcolsep}{0.1pt}
\providecommand{\imgw}{}
\renewcommand{\imgw}{0.088\textwidth}
\providecommand{\img}[1]{}
\renewcommand{\img}[1]{\adjustbox{valign=M}{\includegraphics[width=\imgw]{#1}}}
\providecommand{\lblsize}{}
\renewcommand{\lblsize}{\small}
\providecommand{\rowlbl}[1]{}
\renewcommand{\rowlbl}[1]{\adjustbox{valign=M}{\rotatebox{90}{\lblsize #1}}}
\ifdefined\rowsep\else\newlength{\rowsep}\fi\setlength{\rowsep}{13pt}
\ifdefined\colsep\else\newlength{\colsep}\fi\setlength{\colsep}{0.5pt}
\newcommand{\imgdir}{figs/ffhq_low_nfe/images}
\adjustbox{max width=\linewidth}{%
\begin{tabular}{c @{\hspace{4\colsep}} c @{\hspace{\colsep}} c @{\hspace{\colsep}} c @{\hspace{\colsep}} c @{\hspace{\colsep}} c @{\hspace{\colsep}} c @{\hspace{\colsep}} c @{\hspace{\colsep}} c @{\hspace{\colsep}} c @{\hspace{\colsep}} c}
\rowlbl{IFM} &
\img{\imgdir/baseline_b2_lf8_l2h1_000.png} &
\img{\imgdir/baseline_b2_lf8_l2h1_003.png} &
\img{\imgdir/baseline_b2_lf8_l2h1_005.png} &
\img{\imgdir/baseline_b2_lf8_l2h1_010.png} &
\img{\imgdir/baseline_b2_lf8_l2h1_014.png} &
\img{\imgdir/baseline_b2_lf8_l2h1_018.png} &
\img{\imgdir/baseline_b2_lf8_l2h1_022.png} &
\img{\imgdir/baseline_b2_lf8_l2h1_026.png} &
\img{\imgdir/baseline_b2_lf8_l2h1_029.png} &
\img{\imgdir/baseline_b2_lf8_l2h1_037.png} \\[\rowsep]
\rowlbl{OT-FM} &
\img{\imgdir/otcfm_b2_lf8_l2h1_000.png} &
\img{\imgdir/otcfm_b2_lf8_l2h1_004.png} &
\img{\imgdir/otcfm_b2_lf8_l2h1_006.png} &
\img{\imgdir/otcfm_b2_lf8_l2h1_010.png} &
\img{\imgdir/otcfm_b2_lf8_l2h1_018.png} &
\img{\imgdir/otcfm_b2_lf8_l2h1_021.png} &
\img{\imgdir/otcfm_b2_lf8_l2h1_027.png} &
\img{\imgdir/otcfm_b2_lf8_l2h1_031.png} &
\img{\imgdir/otcfm_b2_lf8_l2h1_035.png} &
\img{\imgdir/otcfm_b2_lf8_l2h1_039.png} \\[\rowsep]
\rowlbl{Ours} &
\img{\imgdir/ours_b2_lf8_l2h1_000.png} &
\img{\imgdir/ours_b2_lf8_l2h1_010.png} &
\img{\imgdir/ours_b2_lf8_l2h1_014.png} &
\img{\imgdir/ours_b2_lf8_l2h1_015.png} &
\img{\imgdir/ours_b2_lf8_l2h1_020.png} &
\img{\imgdir/ours_b2_lf8_l2h1_023.png} &
\img{\imgdir/ours_b2_lf8_l2h1_037.png} &
\img{\imgdir/ours_b2_lf8_l2h1_038.png} &
\img{\imgdir/ours_b2_lf8_l2h1_039.png} &
\img{\imgdir/ours_b2_lf8_l2h1_040.png} \\
\end{tabular}%
}
\caption{\textbf{FFHQ 2-NFE generation.}
Baselines use $2$ NFE,
while our method uses an effective NFE of only $1.72$,
\emph{fewer} than the baselines.
}
\label{fig:ffhq_low_nfe}
\end{figure}

%% file: tables/imagenet_and_meanflow.tex
\begin{table}[ht]
\centering
\begin{minipage}[t]{0.48\linewidth}
  \centering
  \small
  \setlength{\tabcolsep}{4pt}
  \renewcommand{\arraystretch}{1.15}
  \captionof{table}{\textbf{ImageNet CFG Results.}
  FID and curvature
  at $1$ NFE ($1.09$ effective for our method).
  High FIDs reflect partial training.}
  \label{tab:imagenet_results}

  \resizebox{0.8\linewidth}{!}{%
  \begin{tabular}{llcc}
    \toprule
    & Method & FID ($\downarrow$) & Curv. ($\downarrow$) \\
    \midrule
    \multirow{2}{*}{w/o CFG}
      & IFM & 322.18 & 0.227 \\
      & \cellcolor{gray!10} Ours & \cellcolor{gray!10} \textbf{268.14} & \cellcolor{gray!10} \textbf{0.097} \\
    \addlinespace[3pt]
    \multirow{2}{*}{w/ CFG}
      & IFM & 314.80 & 0.223 \\
      & \cellcolor{gray!10} Ours & \cellcolor{gray!10} \textbf{248.42} & \cellcolor{gray!10} \textbf{0.095} \\
    \bottomrule
  \end{tabular}}
\end{minipage}
\hfill
\begin{minipage}[t]{0.48\linewidth}
  \centering
  \small
  \setlength{\tabcolsep}{3pt}
  \renewcommand{\arraystretch}{1.0}
  \captionof{table}{\textbf{MeanFlow on ImageNet.}
  FID and curvature at low-NFE.
  High FIDs reflect partial training.}
  \label{tab:meanflow_no_cfg}
  \resizebox{0.8\linewidth}{!}{%
  \begin{tabular}{lccc}
    \toprule
    Method & NFE & FID ($\downarrow$) & Curv. ($\downarrow$) \\
    \midrule
    \multirow{2}{*}{MeanFlow} & 1 & 59.07 & \multirow{2}{*}{0.221} \\
      & 2 & 55.40 & \\
    \addlinespace[3pt]
    \cellcolor{gray!10} \makecell[l]{MeanFlow \\ $+$ Ours} & \cellcolor{gray!10} 1.72 & \cellcolor{gray!10} \textbf{54.66} & \cellcolor{gray!10} \textbf{0.097} \\
    \bottomrule
  \end{tabular}}
\end{minipage}

\end{table}

%% file: sec/6_limitations.tex
\section{Limitations and conclusion}
\label{sec:limitations}
\label{sec:conclusion}

\myparagraph{Prior generation.}
Our empirical OT-identity coupling applies to the flow
from $\tilde{p}_0$ to the data,
but not to generating $\tilde{p}_0$ itself,
as $G_\phi$ is trained under independent coupling,
so the entire pipeline
is not fully OT-coupled.
This adds only a small inference overhead in practice,
since low-frequency natural-image distributions
are smooth and concentrated (\cref{sec:method}),
making them easier to generate~\cite{ho2022cascaded}.
At the same time, $G_\phi$ remains the main bottleneck:
in an oracle CIFAR-10 experiment,
replacing $G_\phi$ with true low-frequency projections
of training images improves FID at $1$ NFE
from $90.16$ to $28.12$.
Since $G_\phi$ is decoupled from $v_\theta$,
improving it should directly improve the full pipeline
(see \suppcref{sec:supp_oracle}).

\myparagraph{Domain specificity.}
Our prior uses low-frequency projections,
which empirically preserve the OT-identity coupling for natural images
because their spectrum is concentrated at low frequencies~\cite{ruderman1994statistics},
keeping projected samples well separated.
This property also carries over to VAE latents (\cref{sec:latent_results})
and holds for structured datasets such as FFHQ.
For other modalities,
whether an analogous projection preserves the OT-identity coupling
depends on the data geometry and should be examined per domain.

Overall, our results show that OT couplings for flow matching
can be obtained not only by solving an OT problem,
but also by designing a suitable prior.
Replacing the Gaussian prior with low-frequency projections
yields an empirically OT-optimal identity coupling,
straighter trajectories,
and improved few-step generation quality,
while integrating naturally with latent-space models,
classifier-free guidance,
and one-step generation frameworks.

%% file: sec/7_1_acks.tex
\section{Acknowledgements}
This project was partially supported by ISF grants 1574/21 and 2132/23.

%% file: sec/8_sup.tex
\clearpage
\appendix
\setcounter{figure}{0}
\setcounter{table}{0}
\setcounter{equation}{0}
\setcounter{algorithm}{0}
\renewcommand{\thefigure}{\Alph{section}.\arabic{figure}}
\renewcommand{\thetable}{\Alph{section}.\arabic{table}}
\renewcommand{\theequation}{\Alph{section}.\arabic{equation}}
\renewcommand{\thealgorithm}{\Alph{section}.\arabic{algorithm}}
\renewcommand{\theproposition}{\Alph{section}.\arabic{proposition}}

\ifarxiv
  \section*{Appendices}
\else
  \ifunified
    \section*{Appendices}
    The appendices are
  \else
    \setcounter{page}{1}
    \begin{center}
        {\Large\textbf{Designing Optimal Transport Flows}\\[0.5em]Supplementary Material}
    \end{center}
    This supplementary PDF is
  \fi
  accompanied by an interactive HTML page.
  To view it, extract the full supplementary archive and open
  \texttt{visualizations.html} in any modern browser.
  The page contains slider-based exploration of the noise parameter~$\alpha$,
  extended generation grids across NFE budgets,
  and side-by-side comparisons for CIFAR-10 and FFHQ.
\fi

We provide
implementation details (\cref{sec:supp_implementation}),
empirical validation of the OT coupling (\cref{sec:supp_empirical}),
ablation studies (\cref{sec:supp_ablations}),
formal proofs (\cref{sec:supp_proofs}),
and qualitative results (\cref{sec:supp_visualizations}).

\input{sec/supp/technical}

\input{sec/supp/analysis}
\input{sec/supp/ablation}

\input{sec/supp/proofs}
\input{sec/supp/qualitative}

%% file: sec/supp/technical.tex
\section{Implementation details}
\label{sec:supp_implementation}

\subsection{Architecture and training}
\label{sec:supp_algorithms}

\input{tables/supp_pixel_architecture}
\input{tables/supp_latent_architecture}
\myparagraph{CIFAR-10.}
\cref{tab:cifar_arch} lists the architecture
and training configuration for CIFAR-10 (\mcref{sec:cifar_results}).
We train IFM~\mcite{lipman2022flow},
OT-FM~\mcite{tong2024improving},
AlignFlow~\mcite{kong2025alignflow},
and our method from scratch,
except that for AlignFlow
we additionally use the pretrained assignment model
from the official repository.

\input{algs/coupling}

\myparagraph{FFHQ, ImageNet, and MeanFlow.}
\cref{tab:sit_arch} lists the configuration
for the latent-space experiments.
All methods are trained from scratch,
except for MeanFlow~\mcite{geng2025mean}
where we use the pretrained checkpoint
from a popular PyTorch reproduction~\mcite{meanflow_pytorch}
as our baseline.
Training is partial in all latent-space experiments.
On ImageNet, we train for $80$ epochs,
matching the ablation budget
in Tab.~1 of the original paper~\mcite{geng2025mean}.
On FFHQ, we use a comparable number of training steps.
For the classifier-free guidance (CFG) setting,
the same iteration count is reached
with a larger batch size for efficiency,
and we use a guidance scale of $1.5$
at inference following prior work~\mcite{ma2024sit}.

\myparagraph{Prior construction.}
In all experiments,
we set $\alpha = 0.5$ (\mcref{eq:noisy_prior})
for training $v_\theta$.
The low-frequency prior is obtained by downsampling
by a factor of $4$ in each spatial dimension,
applied to pixels for CIFAR-10
and to latents for FFHQ and ImageNet.
\cref{alg:coupling} provides a PyTorch-style pseudocode implementation
of the training and inference procedures.

\myparagraph{Inference overhead.}
Our two-stage pipeline introduces a small fixed overhead
from the lightweight generator $G_\phi$,
which operates on $8 \times 8$ representations
using a smaller variant of the main architecture
(for SiT models, SiT-S/2 with $16$ tokens,
compared to SiT-B/4 with $64$ tokens for $v_\theta$).
\cref{fig:timing} reports the wall-clock time per step
for both models across batch sizes.
One $G_\phi$ step consistently takes
$\approx 0.08$--$0.09\times$ the time of a $v_\theta$ step,
so the $8$ generation steps used for $G_\phi$
amount to $\approx 0.72$ effective NFE of overhead.
This cost is fixed regardless of the number of main-model steps,
meaning the relative overhead shrinks
as NFE increases.
\input{figs/supp/timing/timing_fig}

\subsection{Compute resources}
\label{sec:supp_compute}

Experiments were run on a heterogeneous mix of NVIDIA A5000, A6000, and H100 GPUs.
For a uniform reporting unit, we report the wall-clock training time of a single run on one A5000-equivalent GPU (training only, excluding evaluation), separately for the main flow model $v_\theta$ and the lightweight low-frequency generator $G_\phi$:
CIFAR-10: $v_\theta$ $\approx 28$\,h, $G_\phi$ $\approx 1.5$\,h;
FFHQ-256: $v_\theta$ $\approx 25$\,h, $G_\phi$ $\approx 2.5$\,h;
MeanFlow ImageNet-256: $v_\theta$ $\approx 260$\,h, $G_\phi$ $\approx 24$\,h.
$G_\phi$ thus adds $\sim$5-10\% on top of the main-model training cost.

\subsection{Curvature}
\label{sec:supp_curvature}

To quantify trajectory straightness,
we report the curvature metric
used in prior work~\mcite{lipman2022flow,lee2023minimizing},
which measures the deviation of the learned trajectory
from a straight path,
\begin{equation}
    \label{eq:curvature}
    \mathcal{C} = \mathbb{E}_{t}\left[\left\| (x_1 - x_0) - v_\theta(x_t, t)\right\|^2\right],
\end{equation}
where $v_\theta(x_t, t)$ is the learned velocity
and $x_1 - x_0$ is the constant velocity
of the straight-line path.
A perfectly straight trajectory has zero curvature.
Following Lee~\etal~\mcite{lee2023minimizing},
we evaluate on $10{,}000$ generated samples
using $128$ Euler steps.

\subsection{Inference-time noise calibration}
\label{sec:supp_noise_calibration}

At inference, the starting point
is constructed from $\mathcal{U}(\hat{x}_1^{\downarrow})$
by adding Gaussian noise at level $\alpha$
(\mcref{fig:pipeline_c}), according to \mcref{eq:noisy_prior}.
During training,  this is set to $\alpha = 0.5$.
At inference, we find that a slightly higher noise level
$\alpha^* = \alpha + \varepsilon$,
with $\varepsilon > 0$,
consistently improves FID,
particularly in the few-step regime.
This is consistent with observations
in score-based generative modeling~\mcite{song2019generative},
where slightly expanding the support
of a concentrated distribution
improves the conditioning of the learned vector field
and reduces artifacts at inference time.

Across all benchmarks,
the optimal inference noise follows
$\alpha^* = \alpha_{\text{train}} + \varepsilon$,
where $\varepsilon > 0$ is a small excess
that decreases with the number of function evaluations.
In the few-step regime (up to $4$ effective NFE),
we find $\varepsilon \approx 0.05$
in both pixel space (CIFAR-10)
and latent space (FFHQ, ImageNet),
consistent across datasets and resolutions.
For MeanFlow,
the effect is milder ($\varepsilon \approx 0.01$),
consistent with its single-step formulation
being less sensitive to the source distribution.

%% file: tables/supp_pixel_architecture.tex
\begin{table}[ht]
\centering
\footnotesize
\setlength{\tabcolsep}{4pt}
\renewcommand{\arraystretch}{1.12}
\caption[CIFAR-10 architecture and training.]{%
\textbf{CIFAR-10 Architecture and Training.}
Models share the UNet backbone, as prior work~\mcite{lipman2022flow,tong2024improving},
differing only in capacity (bottom block).}
\label{tab:cifar_arch}

\begin{tabular}{lcc}
\toprule
\multicolumn{3}{l}{\cellcolor{gray!8}\textit{Shared architecture}} \\
\addlinespace[1pt]
Architecture          & \multicolumn{2}{c}{UNet} \\
Base channels         & \multicolumn{2}{c}{128} \\
Res.\ blocks / level  & \multicolumn{2}{c}{2} \\
Attn.\ heads          & \multicolumn{2}{c}{4\;(64 dim/hd)} \\
Dropout               & \multicolumn{2}{c}{0.1} \\
\addlinespace[3pt]
\multicolumn{3}{l}{\cellcolor{gray!8}\textit{Shared training}} \\
\addlinespace[1pt]
Optimizer             & \multicolumn{2}{c}{Adam\;($2\!\times\!10^{-4}$)} \\
LR warmup             & \multicolumn{2}{c}{5k (linear)} \\
Batch size            & \multicolumn{2}{c}{128} \\
Total steps           & \multicolumn{2}{c}{400k} \\
EMA decay             & \multicolumn{2}{c}{0.9999} \\
Grad.\ clip           & \multicolumn{2}{c}{1.0} \\
Augmentation          & \multicolumn{2}{c}{Horiz.\ flip} \\
\midrule
\cellcolor{gray!8} & \cellcolor{gray!8}$v_\theta$ & \cellcolor{gray!8}$G_\phi$ \\
\addlinespace[1pt]
Input res.            & $32^2\!\times\!3$ & $8^2\!\times\!3$ \\
Chan.\ mult.          & [1,2,2,2] & [1,2] \\
Attn.\ res.           & 16 & 4 \\
Parameters            & ${\sim}$36\,M & ${\sim}$10\,M \\
\bottomrule
\end{tabular}
\end{table}

%% file: tables/supp_latent_architecture.tex
\begin{table}[ht]
\centering
\footnotesize
\setlength{\tabcolsep}{4pt}
\renewcommand{\arraystretch}{1.12}
\caption[256$\times$256 architecture and training.]{%
\textbf{256$\times$256 Architecture and Training.}
All setups share SiT~\mcite{ma2024sit} backbones for $v_\theta$ and $G_\phi$.
MeanFlow adds dual time embedding ($r$, $t$).}
\label{tab:sit_arch}

\begin{tabular}{lccc}
\toprule
\multicolumn{4}{l}{\cellcolor{gray!8}\textit{Shared training}} \\
\addlinespace[1pt]
Optimizer       & \multicolumn{3}{c}{AdamW\;($1\!\times\!10^{-4}$, const.)} \\
EMA decay       & \multicolumn{3}{c}{0.9999} \\
\addlinespace[3pt]
\rowcolor{gray!8}
 & $v_\theta$ (B/4) & \multicolumn{2}{c}{$G_\phi$ (S/2)} \\
\addlinespace[1pt]
Depth/hid./heads & 12/768/12 & \multicolumn{2}{c}{12/384/6} \\
Patch / tokens   & 4\,/\,64 & \multicolumn{2}{c}{2\,/\,16} \\
Input            & $32^2\!\times\!4$ & \multicolumn{2}{c}{$8^2\!\times\!4$} \\
Parameters       & ${\sim}$130\,M & \multicolumn{2}{c}{${\sim}$33\,M} \\
\midrule
\cellcolor{gray!8}
  & \cellcolor{gray!8}\textit{FFHQ}
  & \cellcolor{gray!8}\textit{ImageNet}
  & \cellcolor{gray!8}\textit{MeanFlow} \\
\addlinespace[1pt]
Time emb.       & \multicolumn{2}{c}{Single ($t$)} & Dual ($r$,$t$) \\
Betas            & \multicolumn{2}{c}{(.9,\,.999)} & (.9,\,.95) \\
Time sampler     & \multicolumn{2}{c}{Uniform} & Logit-norm. \\
$r\!\neq\!t$ ratio & \multicolumn{2}{c}{---} & 0.25 \\
Grad.\ clip      & \multicolumn{2}{c}{---} & 1.0 \\
Batch size       & 256 & 1024 & 256 \\
Epochs           & 1400 & 80 & 80 \\
Steps            & 380k & 100k & 400k \\
\bottomrule
\end{tabular}
\end{table}

%% file: algs/coupling.tex
\begin{algorithm}[t]
\begin{algorithmic}[1]
\Statex \textbf{Hyperparameters:} noise ratio $\alpha = 0.5$, downsampling factor $k = 4$
\Statex
\Statex \Comment{\color{traincolor}\textbf{Training: construct OT-structured pairs}}
\Function{BuildCoupling}{$x_1$}
    \State $x_1^{\downarrow} \gets$ \texttt{F.interpolate($x_1$, scale\_factor=1/$k$, mode="area")}
    \State $\mathcal{U}(x_1^{\downarrow}) \gets$ \texttt{F.interpolate($x_1^{\downarrow}$, scale\_factor=$k$, mode="bicubic")}
    \State $\epsilon \gets$ \texttt{torch.randn\_like($\mathcal{U}(x_1^{\downarrow})$)}
    \State $x_0 \gets (1 - \alpha)\, \mathcal{U}(x_1^{\downarrow}) + \alpha\, \epsilon$ \Comment{\mcref{eq:noisy_prior}}
    \State \Return $x_0$
\EndFunction
\Statex
\Statex \Comment{\color{traincolor}\textbf{Training loop}}
\For{$x_1 \sim p_1$}
    \State $x_0 \gets$ \Call{BuildCoupling}{$x_1$}
    \State $t \gets$ \texttt{torch.rand(1)}
    \State $x_t \gets t\, x_1 + (1 - t)\, x_0$ \Comment{\mcref{eq:fm_path}}
    \State $\mathcal{L} \gets \|v_\theta(x_t, t) - (x_1 - x_0)\|^2$ \Comment{\mcref{eq:fm_loss}}
    \State Update $\theta$
\EndFor
\Statex
\Statex \Comment{\color{infercolor}\textbf{Inference}}
\Function{Generate}{$G_\phi$, $v_\theta$}
    \State $z \gets$ \texttt{torch.randn(...)}\Comment{sample Gaussian noise}
    \State $\hat{x}_1^{\downarrow} \gets G_\phi(z)$ \Comment{generate low-frequency sample}
    \State $\mathcal{U}(\hat{x}_1^{\downarrow}) \gets$ \texttt{F.interpolate($\hat{x}_1^{\downarrow}$, scale\_factor=$k$, mode="bicubic")}
    \State $\epsilon \gets$ \texttt{torch.randn\_like($\mathcal{U}(\hat{x}_1^{\downarrow})$)}
    \State $\tilde{x}_0 \gets (1 - \alpha)\, \mathcal{U}(\hat{x}_1^{\downarrow}) + \alpha\, \epsilon$
    \State $x_1 \gets \textsc{OdeSolve}(v_\theta, \tilde{x}_0, t{=}0 \to 1)$
    \State \Return $x_1$
\EndFunction
\end{algorithmic}
\caption{\textbf{Training and Inference.}}
\label{alg:coupling}
\end{algorithm}

%% file: figs/supp/timing/timing_fig.tex
\begin{figure}[htbp]
\centering
\includegraphics[width=0.6\linewidth]{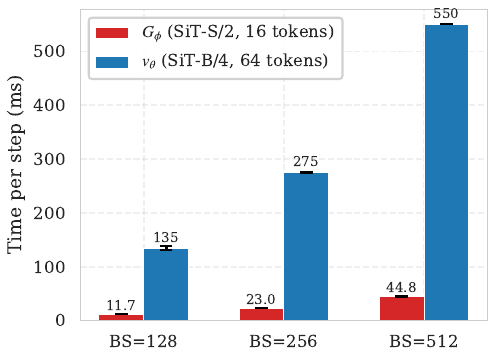}
\caption{\textbf{$v_\theta$ vs. $G_\phi$ Wall Time.}
Inference time per step (ms) for the low-frequency generator $G_\phi$ (SiT-S/2, 16 tokens)
and the main model $v_\theta$ (SiT-B/4, 64 tokens)
across different batch sizes.
One $G_\phi$ step is $\approx$ $0.08$--$0.09$ steps of $v_\theta$.}
\label{fig:timing}
\end{figure}

%% file: sec/supp/analysis.tex
\section{Empirical validation}
\label{sec:supp_empirical}

\subsection{OT preservation under noise}
\label{sec:supp_pairwise}
This section provides extended details
for the experiment in \mcref{fig:ot:noise_vs_ot_swaps},
including the FID-vs-$\alpha$ and pairwise-distance plots.
\input{figs/supp/full_ot_swaps/swaps_fig}

\myparagraph{OT coupling preservation.}
In \mcref{fig:ot:noise_vs_ot_swaps},
we measure the fraction of training pairs
for which the exact discrete OT assignment
(computed using the POT library)
coincides with the identity coupling,
as the noise level $\alpha$ in \mcref{eq:noisy_prior} increases.
The experiment uses $10{,}000$ CIFAR-10 samples in pixel space
and $10{,}000$ FFHQ samples
in the latent representation of a pretrained
autoencoder~\mcite{rombach2022stable_diffusion,ma2024sit}.
\cref{fig:noise_vs_ot_preservation} extends this analysis
to finer noise-ratio increments,
confirming that the identity coupling
is fully preserved up to $\alpha \approx 0.5$.

\input{figs/supp/fid_pairwise/fig}

\myparagraph{FID vs.\ noise ratio.}
\cref{fig:ot:fid_vs_noise} reports FID
on CIFAR-10 as a function of $\alpha$,
evaluated at $2$ NFE
after an early training stage of $40{,}000$ steps
($10\%$ of full training).
Quality improves from $\alpha = 0$ to $\alpha = 0.5$,
then degrades as noise overwhelms
the low-frequency structure.

\myparagraph{Pairwise distance analysis.}
\cref{fig:ot:pairwise_dist_noise}
(discussed in \mcref{sec:ot_by_design})
shows the mean pairwise $\ell_2$ distance
between prior samples as a function of $\alpha$ (proportional to noise ratio).
At $\alpha = 0$,
the mean pairwise distance between
low-frequency projections on CIFAR-10
is ${\approx}\,5.4 \times 10^{-3}$
(normalized by $d = 3{,}072$),
roughly $11\%$ below the original-image distance
of ${\approx}\,6.1 \times 10^{-3}$.
This concentration is consistent
with the $1/f^2$ spectral decay
of natural images~\mcite{field1987relations},
which places the majority of image energy
in the lowest frequencies.
As $\alpha$ increases,
the noise term dominates
and the mean pairwise distance grows steadily.

In the main text (\mcref{eq:noisy_distance}),
we state that the cross term vanishes
and the expected distance decomposes
into a projection term and a noise term.
We now show this step by step.
Expanding the squared distance
between two noisy prior samples
$x_0^{(i)}$ and $x_0^{(j)}$
constructed via \mcref{eq:noisy_prior}.
Expanding the squared norm,
\begin{equation}
\begin{split}
    \norm{x_0^{(i)} - x_0^{(j)}}^2
    &= \norm{(1 - \alpha)\bigl(T(x^{(i)}) - T(x^{(j)})\bigr)
       + \alpha\bigl(\epsilon^{(i)} - \epsilon^{(j)}\bigr)}^2.
\end{split}
\end{equation}
Since the noise terms are independent of the data
and of each other,
the cross term vanishes in expectation,
\begin{equation}
\begin{split}
    \mathbb{E}\bigl[\norm{x_0^{(i)} - x_0^{(j)}}^2\bigr]
    &= (1 - \alpha)^2\,
       \mathbb{E}\bigl[\norm{T(x^{(i)}) - T(x^{(j)})}^2\bigr]
       + 2\alpha^2 d,
\end{split}
\end{equation}
where $d$ is the data dimensionality
and the last term follows from
$\epsilon^{(i)} - \epsilon^{(j)} \sim \mathcal{N}(0, 2I_d)$.
The first term captures the pairwise spread
in the projected subspace,
scaled down by $(1 - \alpha)^2$,
while the second grows linearly with $d$,
explaining the steady increase
visible in \cref{fig:ot:pairwise_dist_noise}.

\subsection{Oracle experiment}
\label{sec:supp_oracle}
\input{figs/oracle/oracle_fig}

To isolate the effect of $G_\phi$
on generation quality,
we evaluate two oracle conditions
alongside our full pipeline on CIFAR-10.
Both bypass $G_\phi$ entirely,
constructing starting points
directly from training images.

\myparagraph{Oracle conditions.}
\textit{Oracle prior} uses our standard training procedure
($\alpha = 0.5$)
but replaces $G_\phi$ at inference
with exact prior samples as in~\mcref{eq:noisy_prior}
derived from training images.
\textit{Deterministic oracle} trains with $\alpha = 0$
(fully deterministic coupling, $x_0 = T(x_1)$)
and starts inference from the exact projections $T(x_i)$.
\cref{fig:cifar_oracle_ablation} reports
FID and curvature for all conditions.

\myparagraph{Results.}
Both oracle conditions achieve lower FID
than the full pipeline across all step counts,
confirming that improving $G_\phi$
can further close the gap to oracle-level performance.
The deterministic oracle achieves the best FID
and the lowest curvature overall,
consistent with the theoretical optimality
of the projection-based coupling.

An interesting exception is that
the full pipeline with $G_\phi$
achieves \emph{lower} curvature
than the noised oracle,
despite worse FID.
We hypothesize that this is because
the noised oracle starts near specific training projections,
so when two projections $T(x_i)$ and $T(x_j)$ are close,
the velocity field must reconcile
conflicting transport directions
within a small neighborhood,
increasing curvature.
$G_\phi$, by contrast, produces samples
that land between training projections
rather than near any specific one,
in regions where the velocity field
may be smoother and more internally consistent.

%% file: figs/supp/full_ot_swaps/swaps_fig.tex
\begin{figure}[htbp]
    \centering
    \includegraphics[width=0.7\linewidth]{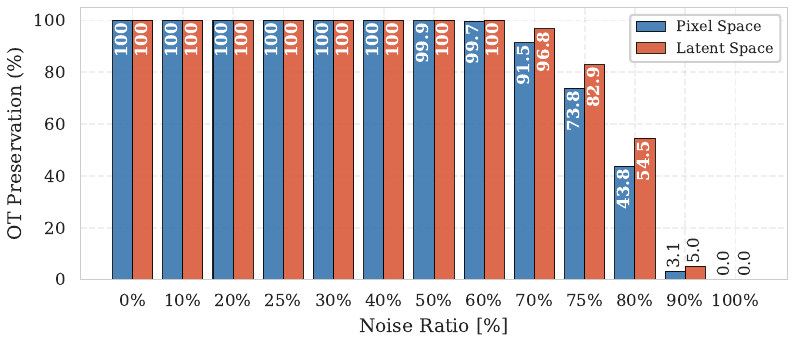}
    \caption{\textbf{Comparison of noise ratio vs. OT preservation.} 
    An elaborated version of \protect\mcref{fig:ot:noise_vs_ot_swaps}
     showing the OT preservation metric as a function of the noise ratio $\alpha$ in \protect\mcref{eq:noisy_prior}.}
    \label{fig:noise_vs_ot_preservation}
\end{figure}

%% file: figs/supp/fid_pairwise/fig.tex
\begin{figure}[htbp]
\centering
\begin{subfigure}[t]{0.48\linewidth}
    \centering
    \includegraphics[width=\linewidth]{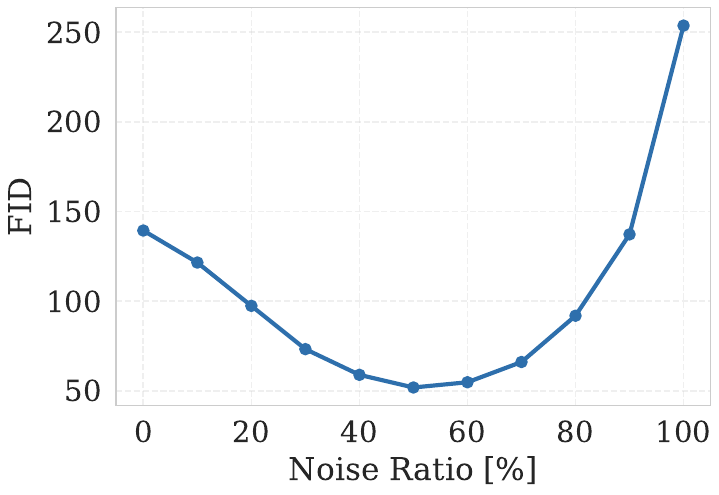}
    \caption{}
    \label{fig:ot:fid_vs_noise}
\end{subfigure}\hfill
\begin{subfigure}[t]{0.48\linewidth}
    \centering
    \includegraphics[width=\linewidth]{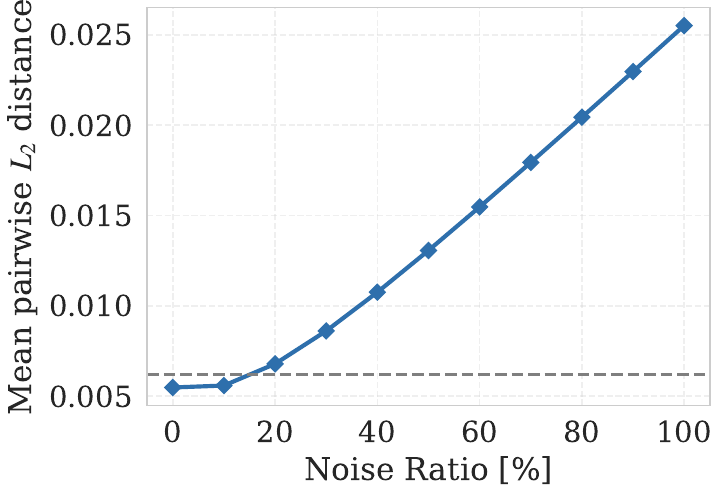}
    \caption{}
    \label{fig:ot:pairwise_dist_noise}
\end{subfigure}
\caption{%
    \textbf{Effect of noise ratio $\alpha$ on generation quality and prior spread.}
    \textbf{(a)}~FID on CIFAR-10 (2 NFE) vs.\ noise ratio.
    \textbf{(b)}~Pairwise $\ell_2$ distance on CIFAR-10 vs.\ noise ratio;
    \textit{\textcolor{gray}{dashed}}: original images.}
\label{fig:supp:fid_pairwise}
\end{figure}

%% file: figs/oracle/oracle_fig.tex
\begin{figure}[t]
\centering
\includegraphics[width=0.7\linewidth]{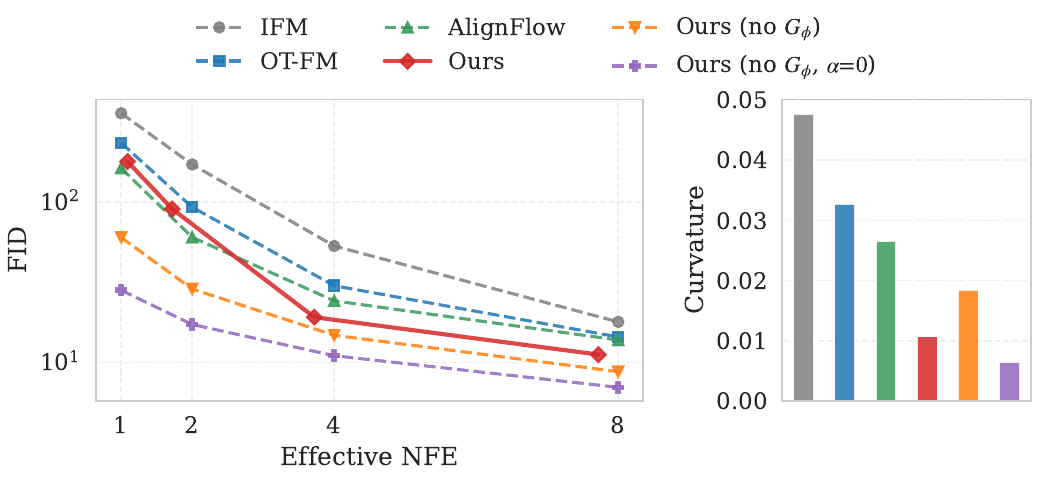}\\[-4pt]
\setcounter{subfigure}{0}%
\makebox[\linewidth]{%
  \makebox[0.5\linewidth]{\small (a)\refstepcounter{subfigure}\label{fig:cifar_oracle_ablation_a}}%
  \makebox[0.2\linewidth]{\small (b)\refstepcounter{subfigure}\label{fig:cifar_oracle_ablation_b}}%
}
\caption{\textbf{Oracle Experiment on CIFAR-10.}
FID ($\downarrow$) vs.\ effective NFE
and trajectory curvature~($\downarrow$)
for baselines, our full pipeline,
and two oracle conditions
that bypass $G_\phi$ using true training-image projections.
Panels (a) and (b) show FID and curvature, respectively.
\textit{Ours (no $G_\phi$)}: $\alpha = 0.5$.
\textit{Ours (no $G_\phi$, $\alpha{=}0$)}: deterministic coupling.
}
\label{fig:cifar_oracle_ablation}
\end{figure}

%% file: sec/supp/ablation.tex
\section{Ablation studies}
\label{sec:supp_ablations}

\subsection{Projection variants}
\label{sec:supp_projection_variants}

This section accompanies the claim
in \mcref{sec:ot_by_design}
that OT-optimality of the identity coupling
is necessary but not sufficient
for our prior design.
We compare four candidate transformations $T$,
each an orthogonal projection
that retains $1/16$ of the input dimensionality.
For all four,
the identity coupling $x_i \mapsto T(x_i)$
is OT-optimal by construction
(\cref{prop:ot_projection,prop:ot_discrete}),
yet only the low-frequency members
yield a usable prior in practice.

\myparagraph{Operator family.}
We instantiate four orthogonal projections
on CIFAR-10 images ($32 \times 32 \times 3$).
\emph{Bicubic} sets $T = \mathcal{U} \circ \mathcal{D}$
with $\mathcal{D}$ area downsampling $32 \times 32 \to 8 \times 8$
and $\mathcal{U}$ bicubic upsampling
(the operator used throughout the paper).
\emph{Fourier truncation} keeps the lowest-frequency $8 \times 8$ block
of DFT coefficients per channel
and zeros the rest.
\emph{Random pixel mask} applies a fixed binary mask,
sampled once and held constant across all images,
that retains a random subset of $1/16$ of the pixels.
\emph{Random patch mask} applies a fixed binary mask
over an $8 \times 8$ grid of $2 \times 2$ patches,
retaining $1/16$ of the patches.
A diagonal $0/1$ mask is symmetric and idempotent,
so both masking operators are valid orthogonal projections.
The first two preserve low-frequency structure.
The last two do not.

\myparagraph{Setup.}
For each operator we train $v_\theta$
at the full CIFAR-10 capacity (\cref{tab:cifar_arch})
for $150{,}000$ steps
under the noisy prior of \mcref{eq:noisy_prior},
once at $\alpha = 0$
and once at $\alpha = 0.5$.
We evaluate in the oracle condition
of \cref{sec:supp_oracle},
i.e., starting points are drawn
from real test images projected through $T$
rather than from the prior generator $G_\phi$.
This isolates the effect of the projection operator
on $v_\theta$
from the quality of the lightweight prior model.
Reconstruction MSE
$\mathbb{E}\,\|T(x) - x\|^2$
on the training set
is the squared transport cost
paid by the identity coupling
under the OT objective of \mcref{eq:ot}.
We additionally verify OT-identity preservation
empirically by computing the exact discrete OT assignment
on $10{,}000$ pairs (Hungarian algorithm)
for each operator at both noise levels.
Preservation is $100\%$ in all eight settings,
confirming that all four operators
are OT-equivalent under our objective,
both deterministically and under the $\alpha = 0.5$ noise interpolation.

\input{tables/supp_projection_variants}

\myparagraph{Results.}
\cref{tab:projection_variants} confirms
the necessary-but-not-sufficient claim.
Reconstruction MSE
splits the operators into two groups,
with the low-frequency family ($0.009$ and $0.010$)
roughly $30 \times$ lower
than the masking family ($0.271$ for both).
This reflects the spectral structure of natural images,
under which the lowest $8 \times 8$ frequency block
captures most of the $\ell_2$ energy,
while $1/16$ of the pixels
or $1/16$ of $2 \times 2$ patches
do not.

At $\alpha = 0$,
the low-frequency operators
yield curvature in the $10^{-6}$ to $10^{-5}$ range,
while the masking operators
sit two to three orders of magnitude higher
($\sim 1.4 \times 10^{-3}$).
FID at $\alpha = 0$
is dominated by the masking operators' high reconstruction error,
so $v_\theta$ is asked to in-paint
$15/16$ of the pixels
from a starting point that is largely zeros,
which is closer to the original generation problem
than to a residual-detail problem.

At $\alpha = 0.5$,
the regime we adopt,
the gap widens.
Bicubic and Fourier truncation
recover FID $\approx 88$,
while the masking operators
degrade to FID above $200$.
Two effects combine.
The transport cost paid by the masking operators,
though OT-optimal among permutations,
is intrinsically high
because the projection itself discards most of the signal.
And the prior support of the masking operators
is concentrated on pixel-level discontinuities
that are not stable under additive Gaussian noise,
so $v_\theta$ must traverse
a fragmented prior
on its way to the data manifold.

OT-optimality is therefore
necessary but not sufficient.
Hungarian preservation is $100\%$
for every operator and every noise level we test,
so the discriminating factor between operators
is not the OT property
but the tractability of the induced prior.
Within the low-frequency family,
bicubic edges out Fourier truncation
on reconstruction MSE,
matches it at $\alpha = 0.5$,
and is the operator we adopt.

\subsection{Downsampling factor}
\label{sec:supp_ablation_downsample}

The downsampling factor controls the dimensionality
of the low-frequency prior
and directly affects the trade-off
between prior informativeness and tractability.
Too little downsampling retains high-frequency content,
leaving $G_\phi$ with a generation task
comparable in difficulty to the original problem
and negating the benefit of a lightweight first stage.
Too much downsampling discards structure
needed for the flow model to produce high-quality samples.
\cref{fig:ds_factor_ablation_cifar} compares
$4\times$ and $8\times$ spatial downsampling on CIFAR-10.
A $4\times$ reduction
($8 \times 8$ prior from $32 \times 32$ images)
yields consistently better FID across step counts,
while $8\times$ downsampling degrades quality
as the prior loses too much structural detail
to serve as an effective starting point.
We adopt $4\times$ in all experiments.

\input{figs/ablation_downsample_factor/ablation_downsample_factor}

\subsection{\texorpdfstring{$G_\phi$}{G\_phi} step count}
\label{sec:supp_ablation_gen_nfe}

The number of ODE steps used by $G_\phi$
determines both the quality of the low-frequency prior
and the generation-time overhead.
Since our method targets the few-step regime,
we select the number of $G_\phi$ steps
to keep the prior-generation overhead
below $1$ effective NFE,
ensuring fair comparison
against single-step and two-step baselines.

\cref{fig:lf_steps_ablation} reports FID
as a function of effective NFE on CIFAR-10,
sweeping $G_\phi$ steps over $\{1, 2, 4, 8, 16\}$
and $v_\theta$ steps over $\{1, 2, 4, 8\}$.
At $1$ $v_\theta$ step,
using $16$ $G_\phi$ steps yields better FID
than $8$ steps,
but at a higher effective NFE ($\approx 2.4$ vs.\ $1.7$).
As the $v_\theta$ step count increases,
the advantage of additional $G_\phi$ steps diminishes.
At $2$+ $v_\theta$ steps,
$8$ $G_\phi$ steps achieve comparable or better FID
than $16$ at lower effective NFE.
We therefore use $8$ steps for $G_\phi$ in all experiments,
yielding an overhead of $\approx 0.72$ effective NFE
(see the Inference Overhead paragraph
in \cref{sec:supp_algorithms}).
Notably, even a single $G_\phi$ step
produces competitive results,
confirming that the low-frequency distribution
is sufficiently smooth
for coarse approximation at minimal cost.

\input{figs/small_generator_nfe_ablation/small_gen_ablation_fig}

%% file: tables/supp_projection_variants.tex
\begin{table}[t]
\centering
\footnotesize
\setlength{\tabcolsep}{5pt}
\renewcommand{\arraystretch}{1.15}
\caption[Projection variants on CIFAR-10.]{%
\textbf{Projection Variants on CIFAR-10.}
Four projections
(strictly orthogonal except for bicubic downsampling),
each retaining $1/16$ of the input dimensionality,
trained for $150{,}000$ steps
under \mcref{eq:noisy_prior}
at $\alpha = 0$ and $\alpha = 0.5$.
FID and curvature are evaluated at $1$ NFE
in the oracle condition
(real test images projected through $T$, $G_\phi$ bypassed).
Reconstruction MSE is $\mathbb{E}\,\|T(x) - x\|^2$
on the training set.
Hungarian assignment confirms
$100\%$ OT-identity preservation
for every operator at both noise levels.
The shaded row is the operator used throughout the paper.
Best per column in bold.}
\label{tab:projection_variants}

\begin{tabular}{lccccc}
\toprule
& & \multicolumn{2}{c}{$\alpha = 0$} & \multicolumn{2}{c}{$\alpha = 0.5$} \\
\cmidrule(lr){3-4} \cmidrule(lr){5-6}
Operator & MSE ($\downarrow$) & FID ($\downarrow$) & Curv.\ ($\downarrow$) & FID ($\downarrow$) & Curv.\ ($\downarrow$) \\
\midrule
\rowcolor{gray!10}
Bicubic              & $\mathbf{0.009}$ & $\mathbf{102.05}$ & $\mathbf{5.0{\times}10^{-6}}$ & $\mathbf{87.82}$ & $\mathbf{0.013}$ \\
Fourier truncation   & $0.010$          & $119.29$          & $1.2{\times}10^{-5}$          & $87.99$          & $\mathbf{0.013}$ \\
Random pixel mask    & $0.271$          & $117.90$          & $1.5{\times}10^{-3}$          & $226.72$         & $0.022$ \\
Random patch mask    & $0.271$          & $144.95$          & $1.4{\times}10^{-3}$          & $205.57$         & $0.023$ \\
\bottomrule
\end{tabular}
\end{table}

%% file: figs/ablation_downsample_factor/ablation_downsample_factor.tex
\begin{figure}[ht]
\centering
\includegraphics[width=0.5\linewidth]{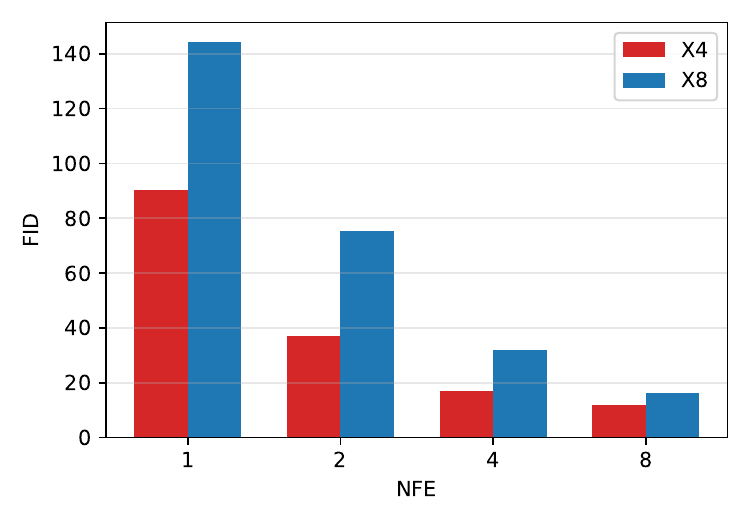}
\caption{\textbf{Downsample factor ablation on CIFAR-10.}
FID as a function of NFE for two spatial downsample factors
(\textcolor{red}{$\times 4$}) and (\textcolor{blue}{$\times 8$}).
A lower downsample factor consistently yields better FID across all NFE budgets}
\label{fig:ds_factor_ablation_cifar}
\end{figure}

%% file: figs/small_generator_nfe_ablation/small_gen_ablation_fig.tex
\begin{figure}[ht]
\centering
\includegraphics[width=0.5\linewidth]{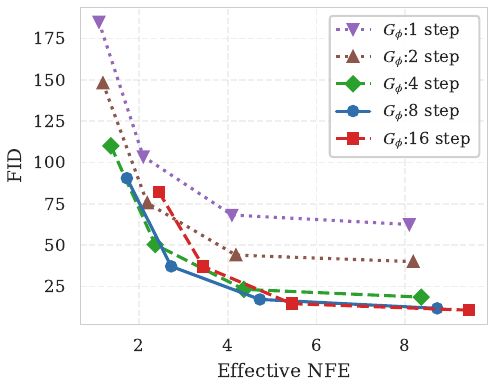}
\caption{\textbf{$G_\phi$ Step Count Ablation on CIFAR-10.}
Each curve fixes the number of $G_\phi$ steps (legend)
and sweeps $v_\theta$ over $\{1, 2, 4, 8\}$ steps.
The x-axis reports effective NFE,
the total generation cost
combining both $G_\phi$ and $v_\theta$ steps
($G_\phi$ steps weighted at $0.09\times$).
At $1$ $v_\theta$ step,
$16$ $G_\phi$ steps improve FID
over $8$ at higher effective NFE.
From $2$ $v_\theta$ steps onward,
$8$ $G_\phi$ steps achieve similar FID to $16$
at lower effective cost.}
\label{fig:lf_steps_ablation}
\end{figure}

%% file: sec/supp/proofs.tex
\section{Proofs and formal details}
\label{sec:supp_proofs}

\begin{proposition}[Orthogonal projections preserve self-proximity]
\label{prop:ot_projection}
Let $T\colon \mathbb{R}^d \to \mathbb{R}^d$ be an orthogonal projection,
\ie, $T^2 = T$ (idempotent) and $T^\top = T$ (symmetric).
Then for any $x, x' \in \mathbb{R}^d$,
\begin{equation}
    \label{eq:proj_nn}
    \|T(x') - x\|^2 = \|T(x) - x\|^2 + \|T(x') - T(x)\|^2.
\end{equation}
In particular, $\|T(x) - x\| \leq \|T(x') - x\|$,
with equality iff $T(x') = T(x)$.
\end{proposition}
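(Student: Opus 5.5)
The plan is to obtain \cref{eq:proj_nn} as a Pythagorean identity. We split $T(x') - x$ into a component lying in the range of $T$ and a component lying in the range of $I - T$, and then show that these two components are orthogonal. Concretely, write
\begin{equation*}
T(x') - x = \bigl(T(x') - T(x)\bigr) + \bigl(T(x) - x\bigr).
\end{equation*}
The first summand is $T(x'-x)$, so it lies in $\mathrm{range}(T)$. The second summand is $-(I-T)x$, so it lies in $\mathrm{range}(I-T)$.

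The key step is to show that these two ranges are orthogonal. For any $u, v \in \mathbb{R}^d$ we have
\begin{equation*}
\langle Tu, (I-T)v\rangle = \langle u, T^\top (I-T) v\rangle = \langle u, (T - T^2)v\rangle = 0.
\end{equation*}
This uses both hypotheses: symmetry lets us move $T$ across the inner product, and idempotence kills the resulting operator $T - T^2$. Applying this with $u = x' - x$ and $v = x$, the cross term in the expansion of $\|T(x') - x\|^2$ vanishes, and \cref{eq:proj_nn} follows immediately.

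The corollary then comes directly from the identity. Since $\|T(x') - T(x)\|^2 \ge 0$, we get $\|T(x) - x\| \le \|T(x') - x\|$. Equality holds exactly when $\|T(x') - T(x)\|^2 = 0$, that is, when $T(x') = T(x)$.

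No real obstacle is expected. The only point requiring care is confirming that both assumptions are needed and are used exactly where stated: symmetry when transferring $T$ to the other side of the inner product, and idempotence when simplifying $T - T^2$ to zero. This matters because the bicubic operator used in the pipeline is not strictly symmetric, so the proposition applies to it only approximately.
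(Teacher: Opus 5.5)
Your proposal is correct and follows essentially the same route as the paper: the same splitting $T(x') - x = T(x'-x) + \bigl(T(x) - x\bigr)$, followed by the Pythagorean theorem and the same reading of the equality case. The only difference is that you verify the orthogonality of the two components explicitly via $T^\top(I-T) = T - T^2 = 0$, whereas the paper shows $T(x) - x \in \ker(T)$ using idempotence and then cites the standard fact that $\operatorname{Im}(T) \perp \ker(T)$ for an orthogonal projection.
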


\begin{proof}
Denote $\delta = x' - x$. By linearity of $T$,
\begin{equation}
    T(x') - x = T(x) + T(\delta) - x = T(\delta) + (T(x) - x).
\end{equation}
The first term $T(\delta) \in \operatorname{Im}(T)$.
For the second, note that by idempotency
\begin{equation}
T(T(x) - x) = T^2 (x) - T(x) = 0.
\end{equation}
so $(T(x) - x) \in \ker(T)$.
For an orthogonal projection,
$\operatorname{Im}(T)$ and $\ker(T)$ are orthogonal subspaces,
so by the Pythagorean theorem,
\begin{equation}
    \|T(x') - x\|^2
    = \|T(\delta)\|^2 + \|Tx - x\|^2
    = \|T(x') - T(x)\|^2 + \|T(x) - x\|^2.
\end{equation}
\hfill$\square$
\end{proof}

\begin{proposition}[Discrete OT optimality of projections]
\label{prop:ot_discrete}
Let $P\colon \mathbb{R}^d \to \mathbb{R}^d$ be an orthogonal projection,
and let $\{x_1, \ldots, x_n\} \subset \mathbb{R}^d$ be a finite point set
with distinct projections, \ie, $Px_i \neq Px_j$ for $i \neq j$.
Then the identity assignment $\sigma = \mathrm{id}$,
mapping $x_i \mapsto Px_i$,
minimizes the total transport cost
\begin{equation}
    \label{eq:transport_cost}
    C(\sigma) = \sum_{i=1}^{n} \|Px_{\sigma(i)} - x_i\|^2
\end{equation}
over all permutations $\sigma$ of $\{1, \ldots, n\}$.
\end{proposition}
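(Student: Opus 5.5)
The plan is to apply \cref{prop:ot_projection} termwise and then sum. For any permutation $\sigma$ and each index $i$, we set $x = x_i$ and $x' = x_{\sigma(i)}$ in \cref{eq:proj_nn}, which gives
\begin{equation*}
\|Px_{\sigma(i)} - x_i\|^2 = \|Px_i - x_i\|^2 + \|Px_{\sigma(i)} - Px_i\|^2 .
\end{equation*}
Summing over $i = 1,\ldots,n$ yields the decomposition
\begin{equation*}
C(\sigma) = C(\mathrm{id}) + \sum_{i=1}^n \|Px_{\sigma(i)} - Px_i\|^2 .
\end{equation*}

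The second sum is nonnegative, so $C(\sigma) \ge C(\mathrm{id})$ for every permutation $\sigma$, and the identity is a minimizer. No rearrangement or cyclic-monotonicity argument is needed: the inequality holds term by term rather than only in aggregate. This is the key simplification, and it is stronger than generic OT optimality.

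The hypothesis that the projections are distinct is used only to upgrade this to \emph{uniqueness}. If $\sigma \neq \mathrm{id}$, pick $i$ with $\sigma(i) \neq i$. Then $Px_{\sigma(i)} \neq Px_i$, so the extra sum is strictly positive and $C(\sigma) > C(\mathrm{id})$. The statement only asserts minimality, so I would record strictness as a remark.

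The only step needing care is checking that \cref{prop:ot_projection} applies verbatim. It is stated for $T$ with $T^2 = T$ and $T^\top = T$, which are exactly the assumptions on $P$, and the identity \cref{eq:proj_nn} holds for arbitrary pairs of points. I do not expect a genuine obstacle. The main subtlety worth noting is that the argument relies on $P$ being an exact orthogonal projection, so it does not directly cover bicubic $\mathcal{U}\circ\mathcal{D}$, which is why that case is treated empirically in the paper.
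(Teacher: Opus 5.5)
Your proof is correct and takes essentially the same approach as the paper: apply \cref{prop:ot_projection} termwise with $x = x_i$, $x' = x_{\sigma(i)}$, and invoke distinctness of the projections only to make some term with $\sigma(i) \neq i$ strictly larger. Your summed identity $C(\sigma) = C(\mathrm{id}) + \sum_i \|Px_{\sigma(i)} - Px_i\|^2$ packages the paper's term-by-term inequality slightly more cleanly, and it makes explicit that distinctness is needed only for strict (unique) optimality, not for minimality.
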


\begin{proof}
By \cref{prop:ot_projection},
for any $i$ and any $j \neq i$,
\[
    \|Px_j - x_i\|^2 = \|Px_i - x_i\|^2 + \|Px_j - Px_i\|^2.
\]
Since the projections are distinct, $\|Px_j - Px_i\|^2 > 0$,
so
\begin{equation}
 \|Px_i - x_i\|^2 < \|Px_j - x_i\|^2.
\end{equation}
Every point $x_i$ is therefore strictly closer
to its own projection than to the projection of any other point.

Now consider any permutation $\sigma \neq \mathrm{id}$.
There exists at least one index $i$ with $\sigma(i) \neq i$.
For that index,
$\|Px_{\sigma(i)} - x_i\|^2 > \| Px_i - x_i\|^2$.
Since
$\|Px_{\sigma(j)} - x_j\|^2 \geq \|Px_j - x_j\|^2$
for all $j$, with strict inequality for at least one,
we have $C(\sigma) > C(\mathrm{id})$.
\end{proof}

\begin{remark}[Low-pass filtering as orthogonal projection]
\label{rem:parseval}
The discrete Fourier transform (DFT) is a unitary transformation.
Let $F$ denote the DFT matrix and let $P_S$
be the diagonal matrix that retains
a subset $S$ of Fourier coefficients
and zeros out the rest.
$P_S$ is an orthogonal projection
in the frequency domain.
Since conjugation by a unitary preserves
orthogonal projections,
$P = F^{-1} P_S F$ is an orthogonal projection
in pixel space.
In particular, choosing $S$
to be the lowest-frequency coefficients
yields low-pass filtering,
so \cref{prop:ot_projection,prop:ot_discrete}
apply directly.

In our pipeline (\mcref{sec:pipeline}),
the transformation $T = \mathcal{U} \circ \mathcal{D}$
applies downsampling $\mathcal{D}$ followed by upsampling $\mathcal{U}$.
When both operations use nearest-neighbor interpolation,
$T$ is exactly an orthogonal projection
onto the subspace of block-constant images,
and the OT guarantee holds by construction.
In practice, we find it better to use
area interpolation for $\mathcal{D}$
and bicubic interpolation for $\mathcal{U}$,
which is not strictly orthogonal
but yields lower reconstruction error,
and the OT structure is preserved empirically
as confirmed in \mcref{fig:ot:noise_vs_ot_swaps}.
\end{remark}

%% file: sec/supp/qualitative.tex
\section{Qualitative results}
\label{sec:supp_visualizations}

We visualize how the noise level $\alpha$
in \mcref{eq:noisy_prior}
affects the structure of the prior $x_0$,
in pixel space (CIFAR-10, \cref{sec:supp_vis_cifar})
and latent space (FFHQ, \cref{sec:supp_vis_ffhq}).
As $\alpha$ increases,
the low-frequency content that underlies the OT coupling
is progressively obscured,
consistent with the quantitative analysis
in \mcref{fig:ot:noise_vs_ot_swaps}
and \cref{sec:supp_pairwise}.

\subsection{Pixel-space prior (CIFAR-10)}
\label{sec:supp_vis_cifar}

\cref{fig:low2high_samples_cifar} shows representative CIFAR-10 samples
across all noise levels.
Each column displays the original image $x_1$,
its low-frequency projection $T(x_1)$,
and the noised starting point $x_0$
for $\alpha\in \{0, 0.1, \ldots 1\}$.
At low noise levels,
the dominant color, spatial layout,
and coarse object structure of the original image
remain clearly visible,
consistent with the quantitative OT preservation
reported in \mcref{fig:ot:noise_vs_ot_swaps}.
Around $\alpha \approx 0.5$, noise begins to dominate
and the visual correspondence to the original gradually fades.

\input{figs/ot_images_table/cifar_table}

\subsection{Latent-space prior (FFHQ)}
\label{sec:supp_vis_ffhq}

\cref{fig:low2high_samples_ffhq} shows the same visualization
for FFHQ in the latent space
of a pretrained autoencoder~\mcite{rombach2022stable_diffusion}.
Each column displays the original image,
its encoded latent,
the downsampled latent $x_1^{\downarrow} = \mathcal{D}(x_1)$,
and $x_0$ at increasing $\alpha$ values.
Unlike pixel space,
the structural correspondence
is harder to assess visually in latent representations,
particularly at moderate noise levels.
This is precisely why the quantitative validation
in \mcref{fig:ot:noise_vs_ot_swaps} is important,
where the identity coupling
is preserved at $100\%$ up to $\alpha = 0.5$
even in latent space.

\input{figs/ot_images_table/ffhq_table}

%% file: figs/ot_images_table/cifar_table.tex
\begin{figure}[ht]
\centering
\setlength{\tabcolsep}{0.2pt}
\providecommand{\imgw}{}
\renewcommand{\imgw}{0.1\textwidth}
\providecommand{\rowlbl}[1]{}
\renewcommand{\rowlbl}[1]{\adjustbox{valign=M}{\rotatebox{90}{\tiny #1}}}
\providecommand{\rowlblbig}[1]{}
\renewcommand{\rowlblbig}[1]{\adjustbox{valign=M}{\rotatebox{90}{\scriptsize #1}}}
\providecommand{\img}[1]{}
\renewcommand{\img}[1]{\adjustbox{valign=M}{\includegraphics[width=\imgw]{#1}}}
\ifdefined\rowsep\else\newlength{\rowsep}\fi\setlength{\rowsep}{14.5pt}
\adjustbox{max width=\linewidth, max height=0.78\textheight}{%
\begin{tabular}{ccccccccc}
\rowlblbig{$x_1$} & \img{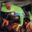} & \img{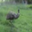} & \img{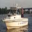} & \img{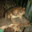} & \img{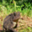} & \img{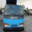} & \img{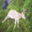} & \img{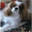} \\[\rowsep]
\rowlblbig{$x_1^{\downarrow}$} & \img{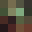} & \img{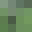} & \img{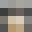} & \img{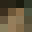} & \img{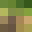} & \img{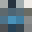} & \img{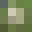} & \img{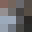} \\[1.5\rowsep]
\rowlbl{$\alpha{=}0$} & \img{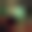} & \img{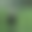} & \img{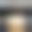} & \img{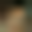} & \img{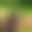} & \img{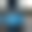} & \img{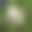} & \img{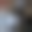} \\[\rowsep]
\rowlbl{$\alpha{=}.1$} & \img{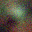} & \img{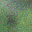} & \img{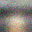} & \img{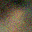} & \img{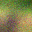} & \img{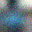} & \img{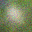} & \img{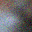} \\[\rowsep]
\rowlbl{$\alpha{=}.2$} & \img{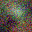} & \img{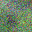} & \img{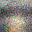} & \img{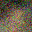} & \img{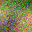} & \img{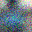} & \img{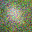} & \img{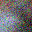} \\[\rowsep]
\rowlbl{$\alpha{=}.3$} & \img{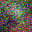} & \img{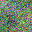} & \img{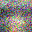} & \img{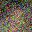} & \img{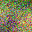} & \img{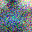} & \img{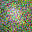} & \img{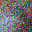} \\[\rowsep]
\rowlbl{$\alpha{=}.4$} & \img{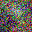} & \img{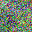} & \img{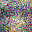} & \img{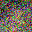} & \img{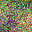} & \img{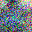} & \img{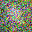} & \img{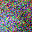} \\[\rowsep]
\rowlbl{$\alpha{=}.5$} & \img{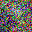} & \img{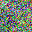} & \img{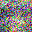} & \img{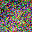} & \img{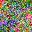} & \img{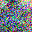} & \img{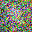} & \img{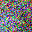} \\[\rowsep]
\rowlbl{$\alpha{=}.6$} & \img{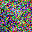} & \img{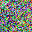} & \img{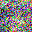} & \img{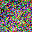} & \img{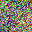} & \img{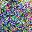} & \img{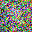} & \img{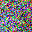} \\[\rowsep]
\rowlbl{$\alpha{=}.7$} & \img{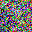} & \img{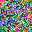} & \img{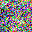} & \img{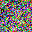} & \img{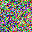} & \img{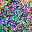} & \img{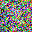} & \img{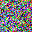} \\[\rowsep]
\rowlbl{$\alpha{=}.8$} & \img{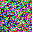} & \img{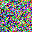} & \img{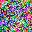} & \img{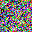} & \img{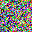} & \img{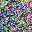} & \img{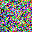} & \img{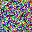} \\[\rowsep]
\rowlbl{$\alpha{=}.9$} & \img{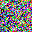} & \img{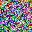} & \img{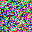} & \img{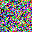} & \img{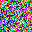} & \img{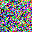} & \img{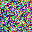} & \img{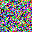} \\[\rowsep]
\rowlbl{$\alpha{=}1$} & \img{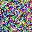} & \img{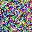} & \img{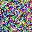} & \img{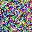} & \img{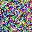} & \img{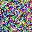} & \img{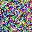} & \img{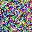} \\
\end{tabular}%
}
\caption{\textbf{Prior Construction on CIFAR-10.}
Each column shows one training image $x_1$ (top),
its low-frequency representation $x_1^{\downarrow} = \mathcal{D}(x_1)$,
and the noised starting point
$x_0 = (1 - \alpha)\,\mathcal{U}(x_1^{\downarrow}) + \alpha\,\epsilon$
(\protect\mcref{eq:noisy_prior})
at increasing noise levels,
ranging from the pure upsampled projection ($\alpha{=}0$)
through the operating point ($\alpha{=}0.5$),
where coarse structure remains visible,
to pure Gaussian noise ($\alpha{=}1$).}
\label{fig:low2high_samples_cifar}
\end{figure}

%% file: figs/ot_images_table/ffhq_table.tex
\begin{figure}[ht]
\centering
\setlength{\tabcolsep}{0.2pt}
\providecommand{\imgw}{}
\renewcommand{\imgw}{0.09\textwidth}
\providecommand{\rowlbl}[1]{}
\renewcommand{\rowlbl}[1]{\adjustbox{valign=M}{\rotatebox{90}{\tiny #1}}}
\providecommand{\rowlblbig}[1]{}
\renewcommand{\rowlblbig}[1]{\adjustbox{valign=M}{\rotatebox{90}{\tiny #1}}}
\providecommand{\img}[1]{}
\renewcommand{\img}[1]{\adjustbox{valign=M}{\includegraphics[width=\imgw]{#1}}}
\ifdefined\rowsep\else\newlength{\rowsep}\fi\setlength{\rowsep}{12.7pt}
\adjustbox{max width=\linewidth, max height=0.78\textheight}{%
\begin{tabular}{ccccccccc}
\rowlblbig{$I$} & \img{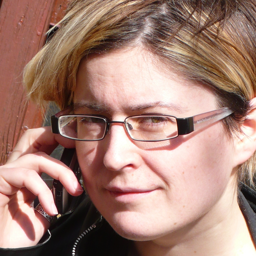} & \img{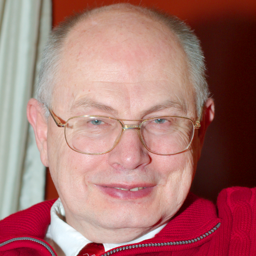} & \img{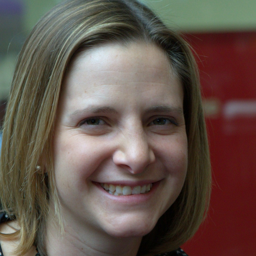} & \img{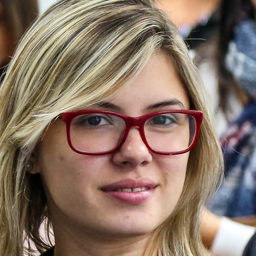} & \img{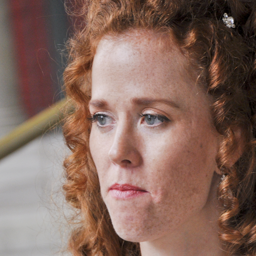} & \img{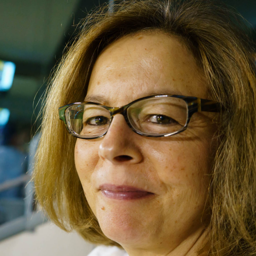} & \img{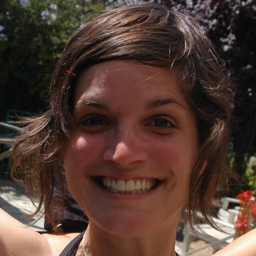} & \img{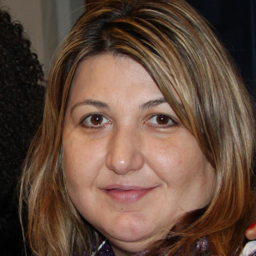} \\[\rowsep]
\rowlblbig{$x_1\!=\!\mathcal{E}(I)$} & \img{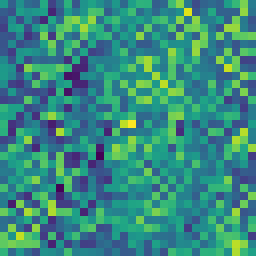} & \img{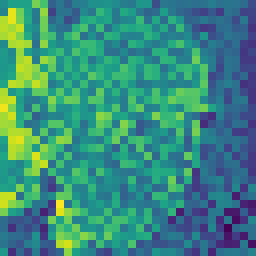} & \img{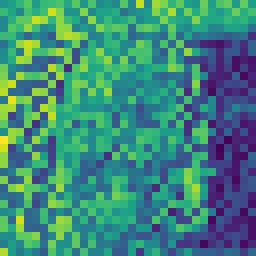} & \img{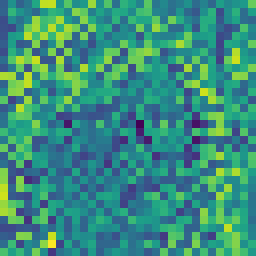} & \img{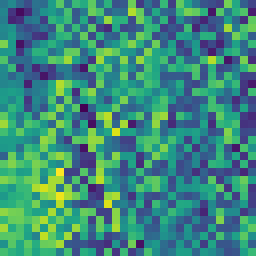} & \img{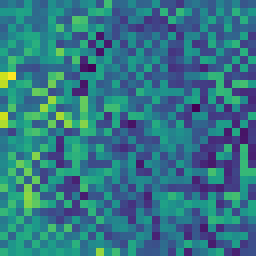} & \img{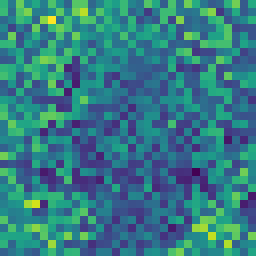} & \img{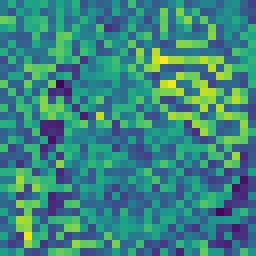} \\[\rowsep]
\rowlblbig{$x_1^{\downarrow}$} & \img{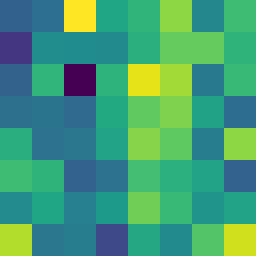} & \img{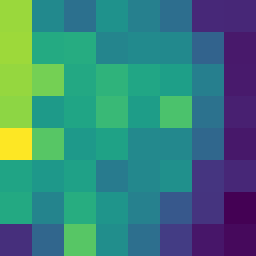} & \img{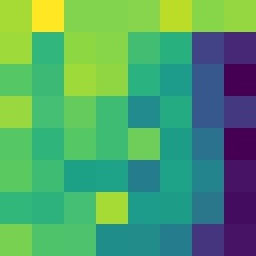} & \img{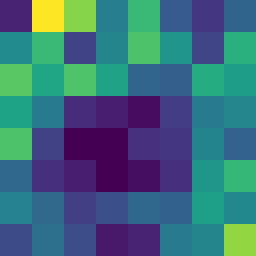} & \img{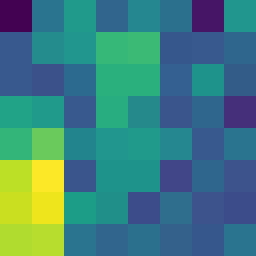} & \img{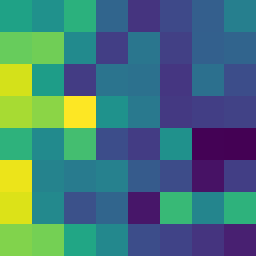} & \img{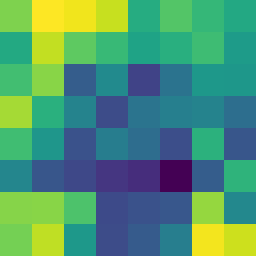} & \img{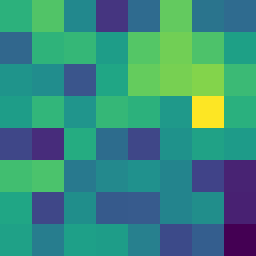} \\[1.5\rowsep]
\rowlbl{$\alpha{=}0$} & \img{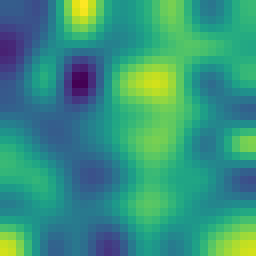} & \img{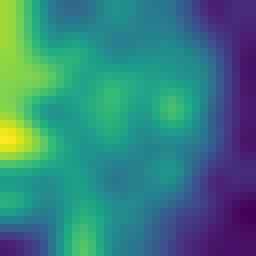} & \img{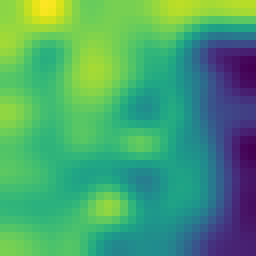} & \img{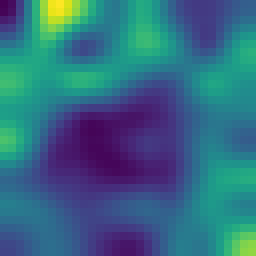} & \img{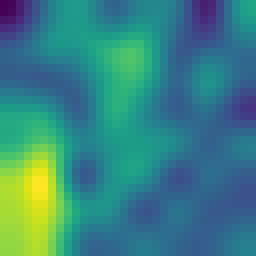} & \img{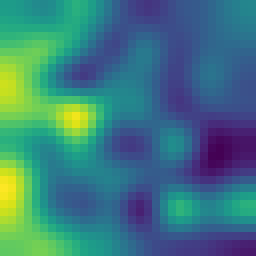} & \img{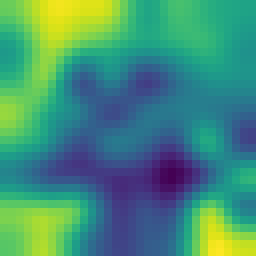} & \img{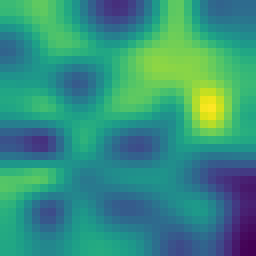} \\[\rowsep]
\rowlbl{$\alpha{=}.1$} & \img{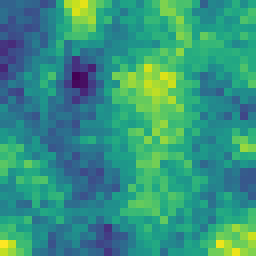} & \img{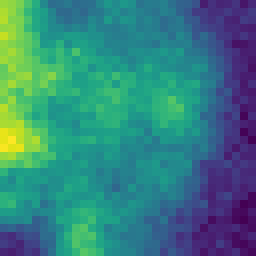} & \img{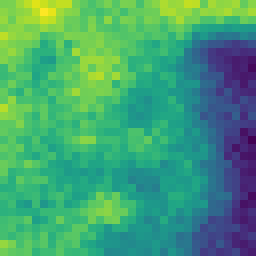} & \img{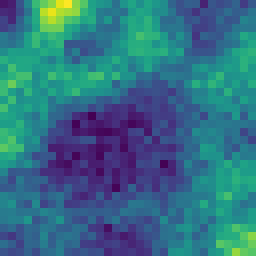} & \img{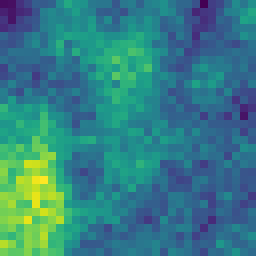} & \img{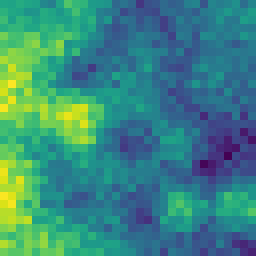} & \img{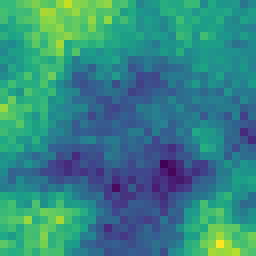} & \img{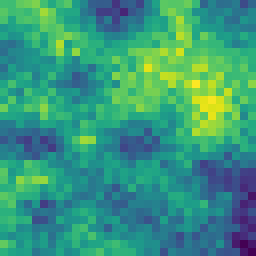} \\[\rowsep]
\rowlbl{$\alpha{=}.2$} & \img{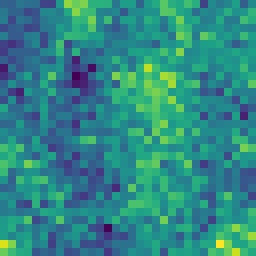} & \img{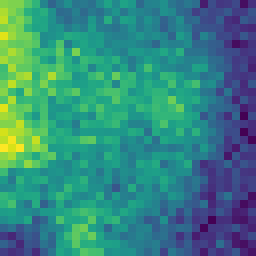} & \img{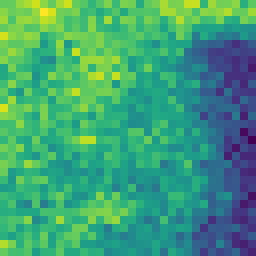} & \img{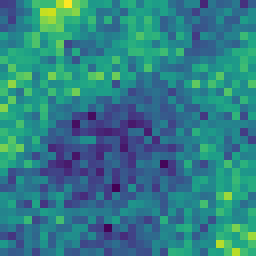} & \img{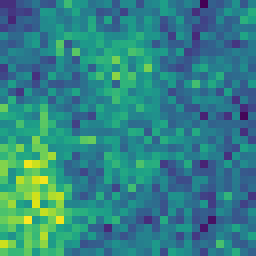} & \img{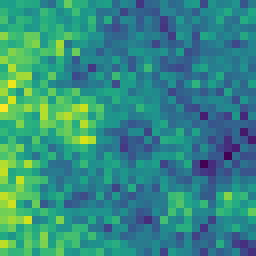} & \img{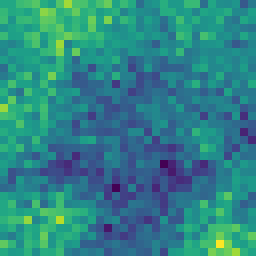} & \img{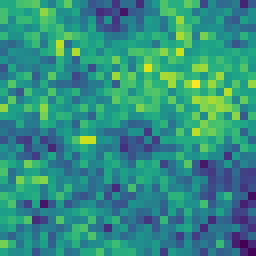} \\[\rowsep]
\rowlbl{$\alpha{=}.3$} & \img{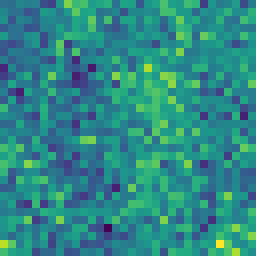} & \img{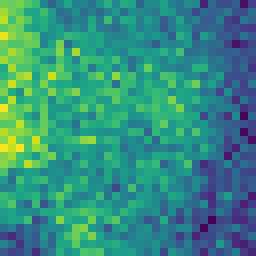} & \img{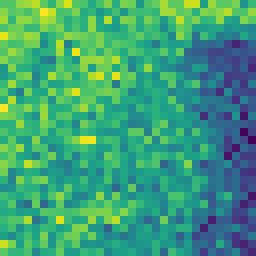} & \img{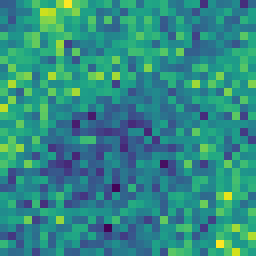} & \img{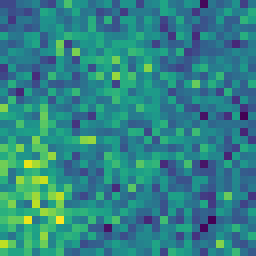} & \img{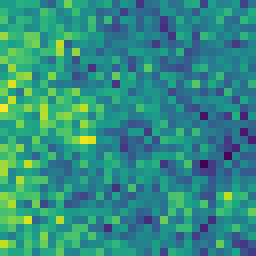} & \img{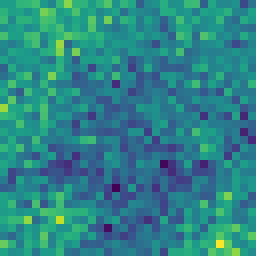} & \img{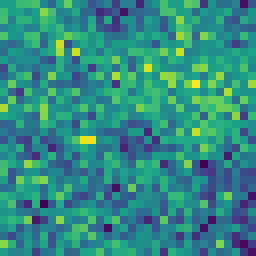} \\[\rowsep]
\rowlbl{$\alpha{=}.4$} & \img{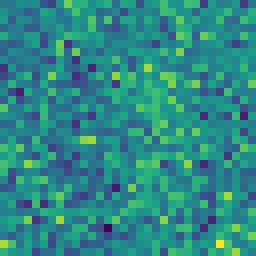} & \img{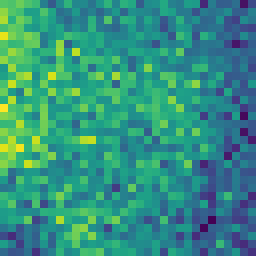} & \img{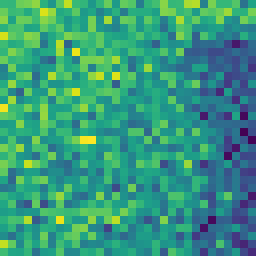} & \img{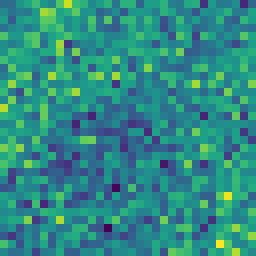} & \img{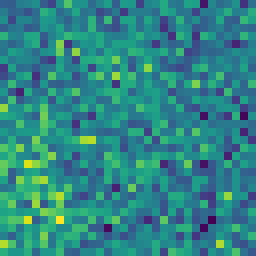} & \img{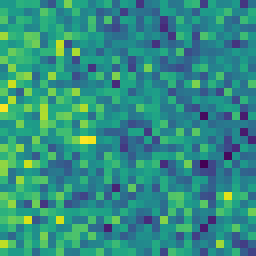} & \img{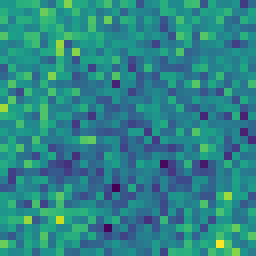} & \img{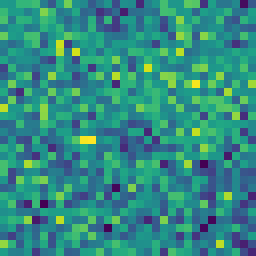} \\[\rowsep]
\rowlbl{$\alpha{=}.5$} & \img{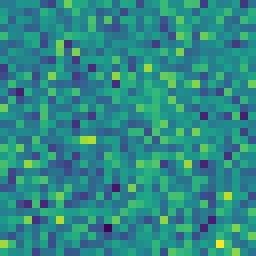} & \img{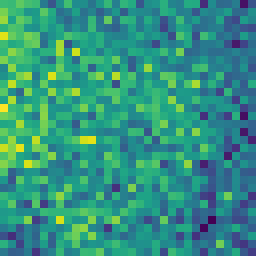} & \img{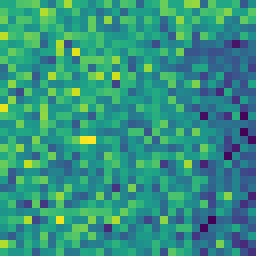} & \img{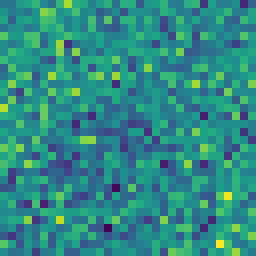} & \img{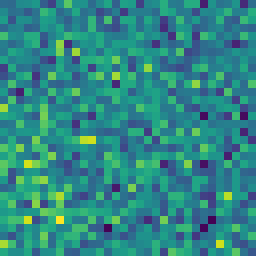} & \img{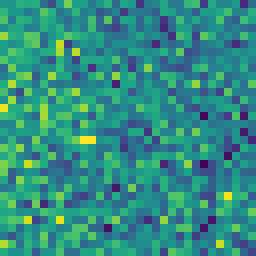} & \img{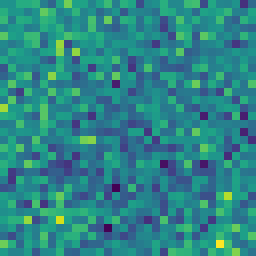} & \img{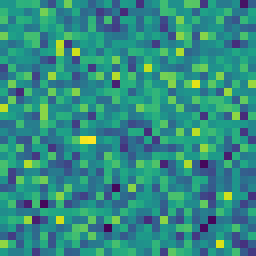} \\[\rowsep]
\rowlbl{$\alpha{=}.6$} & \img{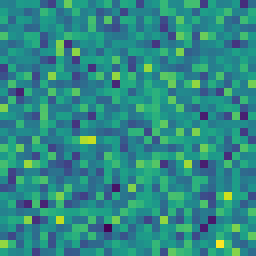} & \img{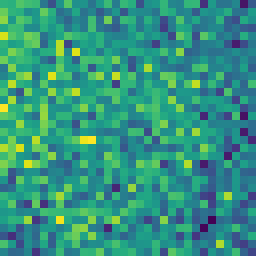} & \img{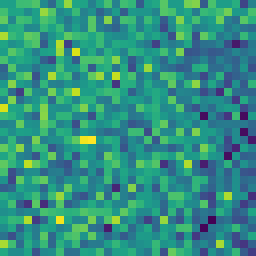} & \img{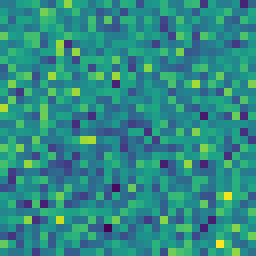} & \img{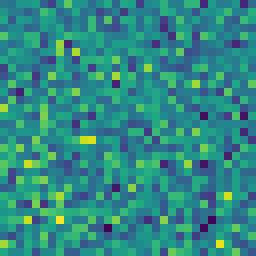} & \img{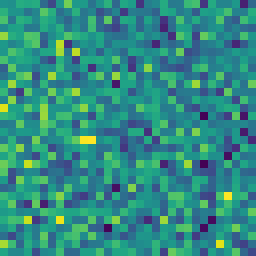} & \img{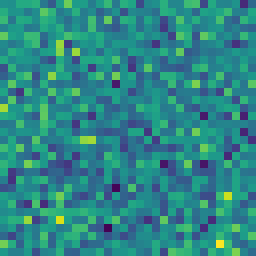} & \img{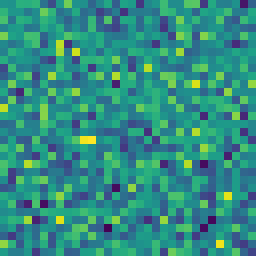} \\[\rowsep]
\rowlbl{$\alpha{=}.7$} & \img{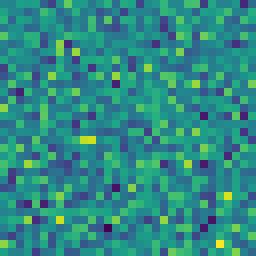} & \img{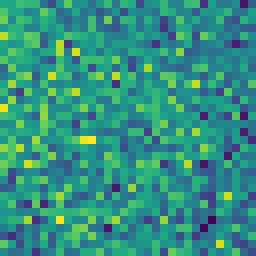} & \img{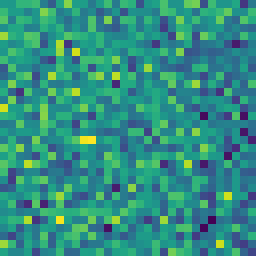} & \img{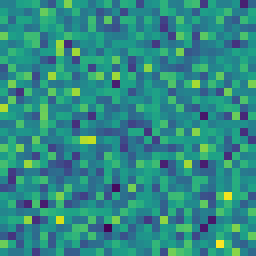} & \img{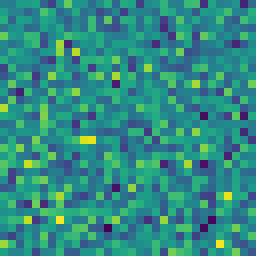} & \img{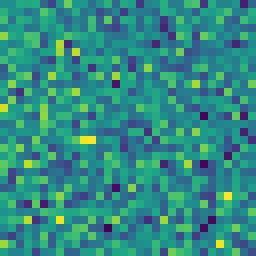} & \img{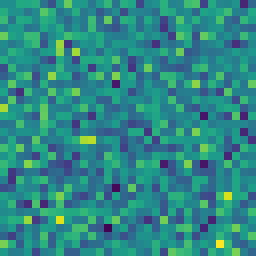} & \img{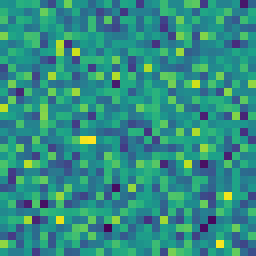} \\[\rowsep]
\rowlbl{$\alpha{=}.8$} & \img{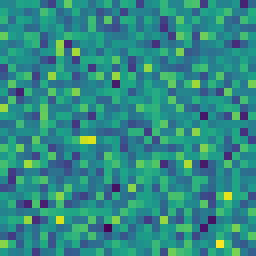} & \img{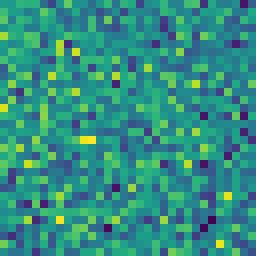} & \img{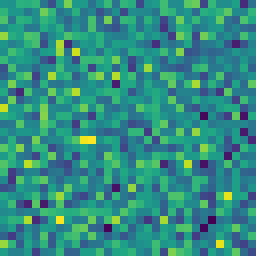} & \img{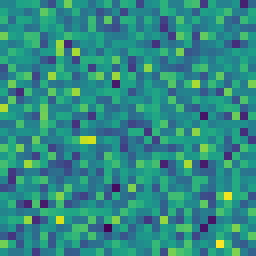} & \img{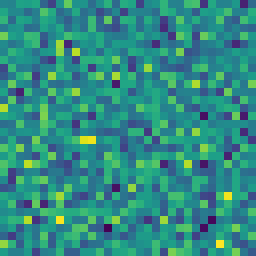} & \img{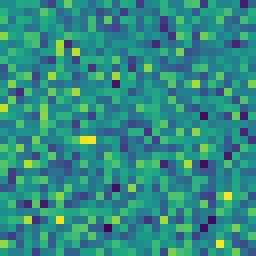} & \img{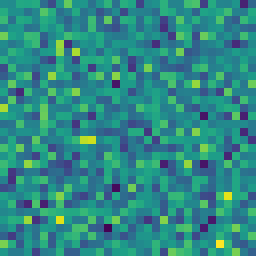} & \img{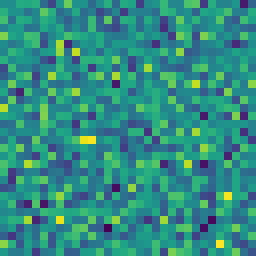} \\[\rowsep]
\rowlbl{$\alpha{=}.9$} & \img{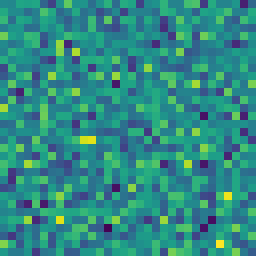} & \img{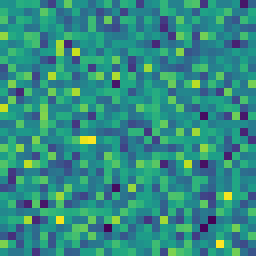} & \img{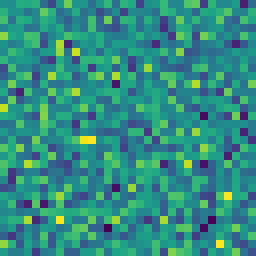} & \img{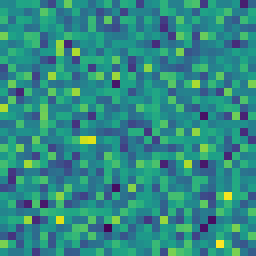} & \img{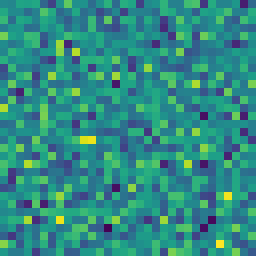} & \img{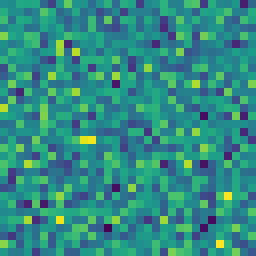} & \img{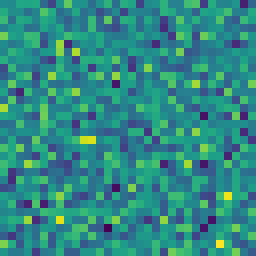} & \img{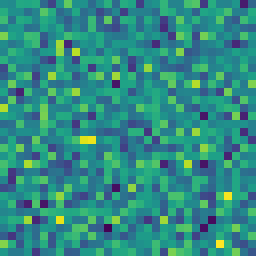} \\[\rowsep]
\rowlbl{$\alpha{=}1$} & \img{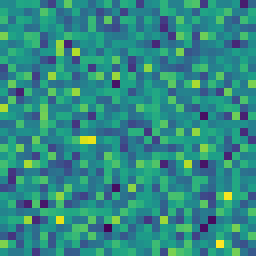} & \img{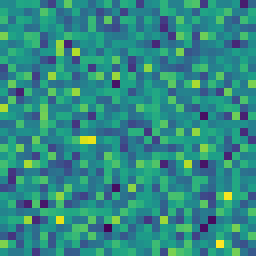} & \img{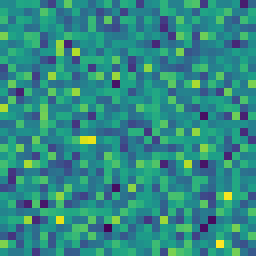} & \img{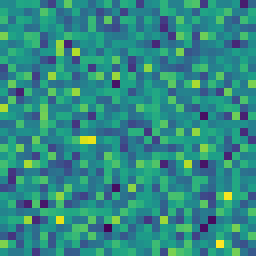} & \img{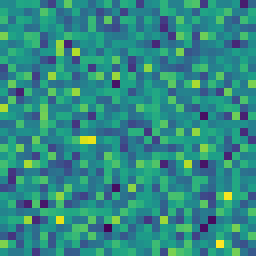} & \img{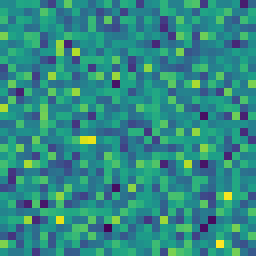} & \img{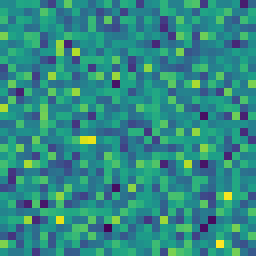} & \img{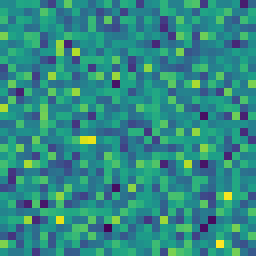} \\
\end{tabular}%
}
\caption{\textbf{Prior Construction on FFHQ (Latent Space).}
The first row shows original images $x_1$ for reference.
The second row visualizes the corresponding latent encoding,
and the third row its low-frequency representation $x_1^{\downarrow} = \mathcal{D}(x_1)$.
Subsequent rows show the noised starting point
$x_0 = (1 - \alpha)\,\mathcal{U}(x_1^{\downarrow}) + \alpha\,\epsilon$
(\protect\mcref{eq:noisy_prior})
at increasing noise levels,
ranging from the pure upsampled projection ($\alpha{=}0$)
through the operating point ($\alpha{=}0.5$)
to pure noise ($\alpha{=}1$).
All operations from the third row onward
are applied in the latent space
of a pretrained autoencoder~\protect\mcite{rombach2022stable_diffusion}.}
\label{fig:low2high_samples_ffhq}
\end{figure}